\newtheorem{theorem}{Theorem}
\newtheorem{lemma}[theorem]{Lemma}
\newtheorem{proposition}[theorem]{Proposition}
\newtheorem{problem}[theorem]{Problem}
\newtheorem{remark}[theorem]{Remark}
\newtheorem{definition}[theorem]{Definition}
\newcommand{\keywords}[1]{\par\addvspace\baselineskip
\noindent\keywordname\enspace\ignorespaces\begingroup\def\and{\unskip\ \raisebox{0.35ex}{\scalebox{0.5}{\textbullet}}\ }%
#1%
\endgroup}
\newcommand{\keywordname}{\textbf{Keywords:}}
\algnewcommand\algorithmicinput{\textbf{Input:}}
\algnewcommand\Input{\item[\algorithmicinput]}
\algnewcommand\algorithmicoutput{\textbf{Output:}}
\algnewcommand\Output{\item[\algorithmicoutput]}
\definecolor{tabblue}{RGB}{31,119,180}
\definecolor{taborange}{RGB}{255,127,14}
\definecolor{tabgreen}{RGB}{44,160,44}
\definecolor{tabred}{RGB}{214,39,40}
\definecolor{tabpurple}{RGB}{148,103,189}
\definecolor{tabbrown}{RGB}{140,86,75}
\definecolor{tabpink}{RGB}{227,119,194}
\definecolor{tabgray}{RGB}{127,127,127}
\definecolor{tabolive}{RGB}{188,189,34}
\definecolor{tabcyan}{RGB}{23,190,207}
\definecolor{fuchsia}{HTML}{FF00FF}
\NewDocumentCommand{\tagbox}{O{red} m}{\tikz[baseline]{\node[anchor=base,text=#1,fill=#1!25,font=\sffamily,rounded corners=1mm,text depth=0.5mm] {\upshape\vphantom{hg}#2};}\,}
\newcommand{\N}{\mathbb{N}}
\newcommand{\R}{\mathbb{R}}
\NewDocumentCommand{\euler}{}{\mathrm{e}}
\DeclarePairedDelimiter{\abs}{\lvert}{\rvert} 
\DeclarePairedDelimiter{\card}{\lvert}{\rvert}
\DeclareMathOperator{\vol}{vol}
\DeclareMathOperator{\cl}{cl}
\NewDocumentCommand{\power}{m}{2^{#1}}
\NewDocumentCommand{\finitesubsets}{O{X}}{[#1]^{<\infty}}
\newcommand{\bigo}{\mathcal{O}}
\DeclareMathOperator{\ext}{ext} 
\NewDocumentCommand{\connected}{D<>{\theta} O{C}}{\sim_{#1,#2}}
\NewDocumentCommand{\nconnected}{D<>{\theta} O{C}}{\nsim_{#1,#2}}
\NewDocumentCommand{\conncomp}{D<>{\theta} O{C} m}{[#3]_{\connected<#1>[#2]}}
\NewDocumentCommand{\witness}{D<>{\theta}}{\mathcal{I}_{#1}}
\NewDocumentCommand{\vcdim}{}{\operatorname{VC}_{\mathrm{dim}}}
\newcommand{\clop}{\convhullsymbol}
\newcommand{\closuresys}{\mathcal{C}}
\newcommand{\convhullsymbol}{\rho}
\NewDocumentCommand{\conv}{}{\convhullsymbol}
\NewDocumentCommand{\ConvSets}{o}{\WConvSets[#1]<\infty>}
\newcommand{\ti}{\mathcal{I}}  
\newcommand{\wconvhullsymbol}{\convhullsymbol}
\NewDocumentCommand{\wconv}{D<>{\theta}}{\wconvhullsymbol_{#1}}
\NewDocumentCommand{\WConvSets}{o D<>{\theta}}{\mathcal{C}_{#2\IfNoValueF{#1}{,#1}}}
\NewDocumentCommand{\prewconv}{D<>{\theta}}{\hat{\wconvhullsymbol}_{#1}}
\NewDocumentCommand{\appwconv}{D<>{\theta}}{\tilde{\wconvhullsymbol}_{#1}}
\NewDocumentCommand{\repwconv}{D<>{\theta} o}{\mu\IfNoValueF{#2}{(#1,#2)}}
\NewDocumentCommand{\blockrepwconv}{D<>{\theta} o}{\mu'\IfNoValueF{#2}{(#1,#2)}}
\NewDocumentCommand{\blocks}{d<>}{\mathcal{B}\IfNoValueF{#1}{_{#1}}}
\NewDocumentCommand{\fn}{m}{\textproc{#1}}
\NewDocumentCommand{\textsymbol}{m}{\text{\sc #1}}
\NewDocumentCommand{\symbolno}{}{\textsymbol{No}}
\NewDocumentCommand{\symboltrue}{}{\textsymbol{True}}
\NewDocumentCommand{\symbolfalse}{}{\textsymbol{False}}
\newcommand{\M}{\mathcal{M}}
\newcommand{\dist}{D} 
\NewDocumentCommand{\neighborhood}{O{r} m}{N_{#1}(#2)}
\NewDocumentCommand{\conceptclass}{m}{\mathcal{#1}}
\NewDocumentCommand{\kfoldunion}{O{k} m}{{#2}_{\cup}^{#1}}
\NewDocumentCommand{\gridGraph}{O{\ell}}{G_{#1}}
\NewDocumentCommand{\periodicGridGraph}{O{\ell}}{\gridGraph[#1]^{\circlearrowleft}}
\newcommand{\MHamming}{\M_{\Hamming}}
\newcommand{\Hamming}{H}
\newcommand{\dHamming}{\dist_{H}}
\NewDocumentCommand{\blockrepwconvHamming}{}{\blockrepwconv_{\Hamming}}
\NewDocumentCommand{\repwconvHamming}{}{\repwconv_{\Hamming}}
\NewDocumentCommand{\ccWConvBoolean}{D<>{\theta} O{n}}{\conceptclass{F}_{#2,#1}}
\NewDocumentCommand{\Rd}{O{d}}{\R^{#1}}
\NewDocumentCommand{\minkowskidist}{m}{\dist_{#1}}
\NewDocumentCommand{\euclideandist}{}{\minkowskidist{2}}
\NewDocumentCommand{\dball}{O{d} D<>{r} O{1} m}{\overline{\mathrm{B}}_{#1, #2}^{#3}(#4)}
\NewDocumentCommand{\cball}{D<>{r}}{\overline{\mathrm{B}}_{#1}}
\NewDocumentCommand{\oball}{D<>{r}}{\mathrm{B}_{#1}}
\NewDocumentCommand{\MPlane}{}{\M_{\mathrm{2}}}
\NewDocumentCommand{\repwconvPlane}{}{\repwconv_{\mathrm{2}}}
\NewDocumentCommand{\blockrepwconvPlane}{}{\blockrepwconv_{\mathrm{2}}}
\NewDocumentCommand{\extreme}{}{\operatorname{extr}}
\NewDocumentCommand{\ccextreme}{}{\extreme^{\circlearrowleft}}
\newcommand{\unitdcubesymbol}{U}
\NewDocumentCommand{\MUnitCube}{}{\M_{\unitdcubesymbol}}
\NewDocumentCommand{\unitdcube}{O{d}}{\unitdcubesymbol_{#1}}
\NewDocumentCommand{\blockrepwconvUnitCube}{}{\blockrepwconv_{\unitdcubesymbol}}
\NewDocumentCommand{\repwconvUnitCube}{}{\repwconv_{\unitdcubesymbol}}
\NewDocumentCommand{\ccWConvRectangles}{D<>{\theta} O{d}}{\conceptclass{HR}_{#2,#1}}
\newcommand{\TS}{T_S}
\newcommand{\TD}{T_D}
\newcommand{\TJ}{T_J}
\newcommand{\TM}{T_M}
\newcommand{\decomp}{\mathcal{R}}
\newcommand{\cC}{\mathcal{C}}
\begin{document}

\title{Learning weakly convex sets in metric spaces}

\date{} 

\author[1,3]{Eike Stadtl\"ander\thanks{Email: \texttt{stadtlaender@cs.uni-bonn.de}}}
\author[1,2,3]{Tam\'as Horv\'ath\thanks{Email: \texttt{horvath@cs.uni-bonn.de}}}
\author[1,2,3]{Stefan Wrobel\thanks{Email: \texttt{wrobel@cs.uni-bonn.de}}}

\affil[1]{Dept. of Computer Science, University of Bonn, Germany}
\affil[2]{Fraunhofer IAIS, Schloss Birlinghoven, Sankt Augustin, Germany}
\affil[3]{Lamarr Institute for Machine Learning and Artificial Intelligence} 

\maketitle

\begin{abstract}
    One of the central problems studied in the theory of machine learning is the question of whether, for a given class of hypotheses, it is possible to efficiently find a {consistent} hypothesis, i.e., which has zero training error. 
    While problems involving {\em convex} hypotheses have been extensively studied, the question of whether efficient learning is possible for {\em non-convex} hypotheses composed of possibly several disconnected regions is still less understood.
    Although it has been shown quite a while ago that efficient learning of weakly convex hypotheses, a parameterized relaxation of convex hypotheses, is possible for the special case of Boolean functions, the question of whether this idea can be developed into a \textit{generic paradigm} has not been studied yet.
    In this paper, we provide a positive answer and show that the consistent hypothesis finding problem can indeed be solved in polynomial time for a broad class of weakly convex hypotheses over metric spaces. 
    To this end, we propose a general domain-independent algorithm for finding consistent weakly convex hypotheses and prove sufficient conditions for its efficiency that characterize the corresponding hypothesis classes.
    To illustrate our general algorithm and its properties, we discuss several non-trivial learning examples to demonstrate how it can be used to efficiently solve the corresponding consistent hypothesis finding problem.
    Without the weak convexity constraint, these problems are 
    known to be computationally intractable.
    We then proceed to show that the general idea of our algorithm can even be extended to the case of {\em extensional} weakly convex hypotheses, as it naturally arise, e.g., when performing vertex classification in graphs. 
    We prove that using our extended algorithm, the problem can be solved in polynomial time provided the distances in the domain can be computed efficiently.

    \keywords{concept learning \and consistent hypothesis finding \and intersection-closed concept classes \and convexity \and closure systems}
\end{abstract}

\section{Introduction}
\label{sec:introduction}
One of the central problems of concept learning is the \textit{consistent hypothesis finding} (CHF) problem defined as follows: Given a set of positive and negative examples of an unknown target concept, find a consistent hypothesis, i.e., which has zero training error, from the underlying hypothesis class if such a hypothesis exists. 
This problem has been extensively studied for \textit{convex} hypothesis classes defined by geodesic convexity over metric spaces~\citep{Menger1928}.
Examples include axis-aligned hyperrectangles \citep{Blumer1989}, shortest-path convexity in graphs \citep{Seiffarth2023}, or conjunctions \citep{Valiant1984}. 
The CHF algorithms typically utilize that convex hypothesis classes
can be characterized by a \textit{convex hull} closure operator.
A major weakness of convex hypotheses is that they are composed of a \textit{single} contiguous block, which severely limits the expressive power of convex hypothesis classes.
This limitation motivates the question of whether efficient learning is possible for {\em non-convex} hypotheses, i.e., those consisting of possibly several disconnected, not necessarily convex regions. 

\cite{Ekin2000} give an affirmative answer to this question for the \textit{special} case of Boolean functions by showing that the CHF problem can be solved efficiently for the class of functions satisfying $k$-convexity, a relaxed notion of convexity defined as follows: An $n$-ary Boolean function $f$ is \textit{$k$-convex} for some $k \geq 0$ if all points on all shortest paths between two true points of $f$ with Hamming distance at most $k$ are also true points of $f$.
The true points of such a \textit{weakly convex} Boolean function form a set of subcubes of the $n$-dimensional Boolean hypercube that have a pairwise distance greater than $k$. 
Accordingly, a $k$-convex Boolean function can be represented by a disjunctive normal form (DNF).
It is a well-known result that, under widely believed complexity assumptions, finding a consistent DNF with the smallest number of terms, i.e., subcubes of the $n$-dimensional Boolean hypercube, is NP-hard, if there is no constraint on the relationship between the subcubes~\citep{Pitt1988}. 
Somewhat surprisingly, to the best of our knowledge, the question of whether the idea of $k$-convexity can be developed into a generic paradigm for solving the CHF problem for convex hypothesis classes over \textit{other} metric spaces has not been studied yet.

In this work, we close this gap and provide a positive answer by adapting the above idea in \citep{Ekin2000}. 
We show that the CHF problem can indeed be solved efficiently for a broad class of \textit{weakly convex} hypotheses over metric spaces, where a subset $A$ of a metric space is weakly convex if it is (topologically) closed and for all $x, y \in A$ and $z$ in the domain, $z$ belongs to $A$ whenever $x$ and $y$ are \textit{close} to each other with respect to a threshold $\theta$ and the three points satisfy the triangle inequality with equality.
To this end, we first prove that weakly convex sets give rise to a \textit{unique} decomposition into a set of ``connected'' blocks that have a pairwise distance greater than $\theta$ and that the weakly convex hulls of a set grow monotonically  with $\theta$, while  their number of  ``contiguous'' blocks decreases. 
We also show that weakly convex hypothesis classes are intersection closed. 
Using these results, we provide a general \textit{domain-independent} algorithm for solving the CHF problem for weakly convex hypothesis classes and prove sufficient conditions for the efficiency of this algorithm.
Our CHF algorithm assumes that the hypotheses are given \textit{intensionally}, i.e., by some property.
Its solution is \textit{optimal} in the sense that it computes the consistent weakly convex hull of the positive examples that has the \textit{smallest} number of blocks.

To illustrate our general algorithm and its properties, we consider the CHF problem for unions of Boolean hypercubes, axis-aligned hyperrectangles, and convex polygons. While, for example, finding a consistent hypothesis with the smallest number of axis-aligned hyperrectangles is NP-hard in general~\citep{Bereg2012}, for weakly convex unions of axis-aligned hyperrectangles we show in a fairly simple way how our algorithm can be used to efficiently solve the CHF problem. Using this result, we then prove that weakly convex unions of axis-aligned hyperrectangles are polynomially PAC-learnable.

We also show that the general idea behind our algorithm can even be extended to the case that weakly convex hypotheses are given {\em extensionally}, i.e., by enumerating their elements, as it naturally arises, for example, in vertex classification in graphs. 
For this setting we show that our extended algorithm computes the consistent weakly convex hull of the positive examples with the \textit{smallest} number of blocks in \textit{polynomial} time, if the distance matrix for the domain can be computed efficiently.

\noindent
\textbf{Outline} \ The rest of the paper is organized as follows.
We overview the related work in Section~\ref{sec:relatedwork}
and collect the necessary notions 
in Section~\ref{sec:preliminaries}.
In Section~\ref{sec:weakconvexity}, we define weakly convex sets 
and prove some of their basic properties.
Sections~\ref{sec:intensional-setting} and \ref{sec:extensional-setting} are devoted to the algorithm learning weakly convex hypotheses in the intensional problem setting and to its extended version for the extensional case, respectively.
Finally, we conclude in Section~\ref{sec:conclusion} and mention some problems for future work.

\noindent
\textbf{Remark} \ \textit{A short version of this paper appeared in \citep{Stadtlaender2021}.}

\section{Related Work} 
\label{sec:relatedwork}
The CHF problem and its variants have been intensively studied also in other fields of computer science.
For example, a closely related problem considered in discrete algorithms is the \textit{red-blue set covering problem}~\citep{Carr2000} defined as follows: 
Given disjoint finite sets $R$ and $B$ of red and blue points and the trace\footnote{The trace $\mathcal{F}_{|Y}$ of a set system $\mathcal{F} \subseteq 2^X$ on a set $Y$ is the set system $\{S \cap Y: S \in \mathcal{F}\}$.}
$\mathcal{S}_{|R\cup B}$ of a set system $\mathcal{S}$, find a family $\mathcal{S}' \subseteq \mathcal{S}$ such that $\mathcal{S}'$ covers all blue points and as few red points as possible.
Thus, in contrast to the CHF problem, the goal is to \textit{minimize} the number of red points covered by $\mathcal{S}'$ (i.e., which are ``misclassified''), and not $|\mathcal{S}'|$.
This problem is NP-hard even for the cases that $R, B \subseteq \R^2$ and $\mathcal{S}$ is the family of axis-aligned unit squares~\citep{Chan2015} or that of axis-aligned rectangles~\citep{Abidha2024}.
As another example, we mention computational geometry, where the CHF problem is called the \textit{class cover problem} and defined as follows: 
Given disjoint finite sets $R$ and $B$ of red and blue points and a set system $\mathcal{S} \subseteq \power{R \cup B}$, find a family $\mathcal{S}' \subseteq \mathcal{S}$ of the \textit{smallest} cardinality that covers $B$ and is disjoint with $R$ (i.e., no missclassification is allowed). 
Various special cases of this problem have been studied in this research field.
For example, this problem is NP-complete if $\mathcal{S}$ is the trace of a set of balls centered at the blue points~\citep{Cannon2004} or that of all axis-aligned rectangles of the plane~\citep{Bereg2012} on $R \cup B$.
We also mention the \textit{red-blue line separation} problem defined as follows: Given disjoint finite sets $R, B \subseteq \R^2$ of red and blue points and a positive integer $k$, decide if there is a set of at most $k$ lines that separate the red and blue points. 
This problem is NP-complete~\citep{Megiddo1988} and remains NP-complete even for the case that the lines are required to be axis-parallel~\citep{Calinescu2005}. 
The separating lines in this case define rectangular areas and the task is to cover all points with monochromatic axis-aligned rectangles.

Our paper deals with the CHF problem for \textit{intersection-closed} hypothesis classes.
There are several results for this special case in machine learning~\citep[see, e.g.,][]{Auer1998,Blumer1989,Helmbold1990,Natarajan1987,Pitt1988}.
Such hypothesis classes are \textit{closure systems} and their elements can be characterized by a \emph{closure operator}~\citep[see, e.g.,][]{Davey2002}. 
We consider closure systems over \textit{arbitrary} metric spaces defined by \textit{geodesic convexity}~\citep{Menger1928,vandeVel1993}.
Recently, there has been an increasing interest in learning this kind of closure systems over \textit{graphs}. 
Examples include vertex classification
\citep{deAraujo2019,Seiffarth2023,Thiessen2021,Thiessen2022} and recovering clusterings \citep{Bressan2021}.
Unless otherwise specified, by convex sets we always mean geodesically convex sets over metric spaces.

Convex sets form \textit{single} regions that are ``connected'', i.e., contain all elements that lie ``between'' any two of their elements. However, this property can make them
too restrictive for learning scenarios, raising the question whether efficient learning is possible for \textit{non-convex} hypotheses that consist of possibly several well-separated, not necessarily convex regions.
A natural step in this direction could be to consider the 
\textit{generalized} CHF problem: Find $k$ hypotheses for a given (or equivalently, for the smallest) $k$ such that their union is consistent with the examples. 
However, this problem can be computationally intractable; examples include the infeasibility of deciding the existence of a consistent $k$-term-DNF for any $k \geq 2$~\citep{Pitt1988}.

In the above approach, there is no restriction on convex sets.
In contrast, our general purpose algorithm requires a minimum distance between the regions to guarantee efficiency. 
It is inspired by the definition of \textit{$k$-convex} Boolean functions~\citep{Ekin2000}, where the convexity condition must only hold for such points of the Hamming space that have a distance at most a threshold $k$.
\cite{Ekin2000} show that the CHF problem can be solved \textit{efficiently} for $k$-convex Boolean functions, which are strict extensions of single conjunctions.
The same relaxation of convexity to $k$-convexity or to very similar notions have also been studied for various types of \textit{discrete} metric spaces and for \textit{fixed} values of $k$.\footnote{Victor Chepoi, private communication, 2021.}
Examples include weakly modular graphs \citep{Chepoi1989}, $\Delta$-matroids and basis graphs of matroids \citep{Chepoi2007}, ample classes \citep{Chalopin2022}, and hypercellular graphs \citep{Chepoi2020}.
Similar to our work, these papers are all concerned with some ``local to global convexity'' results.
However, they do \textit{not} discuss their algorithmic and learnability aspects.

In the context of unsupervised learning, \citet{Bressan2021} introduce a distance-based notion of convexity for graphs, more precisely, for clusterings of the vertices of a graph.
It differs, however, from our definition of weak convexity applied to graphs in at least two aspects: 
First, in a broad sense, they control the inter- and intra-cluster distances with \textit{two} parameters instead of a single one. 
Second, their notion of ``convex hull'' is induced by a finite set of simple paths of bounded length that depends on the geodesic distance between their endpoints.
In contrast, our notion of weak convexity is based on the set of all shortest paths of length bounded by a \textit{static} threshold.

\section{Preliminaries} 
\label{sec:preliminaries} 
In this section, we collect the necessary notions and fix the notation.
For any $n \in \N$ and $\tau \in \R$, $[n]$ and $\R_{\geq \tau}$ denote the sets $\{1,2,\dots,n\}$ and $\{x \in \R: x \geq \tau\}$, respectively. 
The family of all finite subsets of a set $X$ is denoted by $\finitesubsets[X]$.
A \emph{metric space} $\M$ is a pair $(X, \dist)$, where $X$ is a set and $\dist: X \times X \to \R_{\geq 0}$ is a metric on $X$.
$\M$ is complete if every Cauchy sequence in $\M$ converges to an element of $\M$. 
It follows that finite metric spaces are complete.
A subset $A \subseteq X$ is \textit{closed} if it contains all of its limit points.
For a subset $A$ of a metric space $\M$, $\cl(A)$ denotes the smallest closed subset of $\M$ that contains $A$.
The distance between two sets $A, B \subseteq X$ is defined by $D(A, B) = \inf \{D(a, b) : a \in A, b \in B\}$. 
The Manhattan and the Euclidean distances in $\Rd$ are denoted by $D_1$ and $D_2$, respectively.

A \emph{closure system} over some ground set $X$ is a pair $(X,\closuresys)$ with $\closuresys \subseteq \power{X}$ such that $\closuresys$ is closed under arbitrary intersection, where $\power{X}$ denotes the power set of $X$.
We assume that $X \in \closuresys$. 
One of the elementary properties of closure systems is that they can be characterized in terms of fixed points of closure operators \citep[see, e.g.,][]{Davey2002}. 
More precisely, a function $\rho: \power{X} \to \power{X}$ is a \emph{closure operator} if it satisfies the following properties for all $A, B \subseteq X$: (i) $A \subseteq \rho(A)$ (extensivity), (ii) $\rho(A) \subseteq \rho(B)$ if $A \subseteq B$ (monotonicity), and (iii) $\rho(\rho(A)) = \rho(A)$ (idempotency). 
If $\rho$ is extensive and monotone, but not necessarily idempotent, then it is a \emph{preclosure operator}. 
The fixed points of a closure operator $\rho$ are called \textit{$\rho$-closed} and the set system $(X,\closuresys_\rho)$ with $\closuresys_\rho = \{A \subseteq X : \rho(A) = A\}$ is always a closure system. 
Conversely, for any closure system $(X,\closuresys)$, the function $\rho: \power{X} \to \power{X}$ with $\rho(A) = \bigcap\{C \in \closuresys: A \subseteq C\}$ for all $A \subseteq X$ is a closure operator satisfying $\closuresys = \{\rho(A) : A \subseteq X\}$. 
One can easily check that the set operator $\cl$ defined above is a closure operator.

Our notion of \emph{weak convexity} is inspired by that of \emph{$k$-convexity} introduced by \citet{Ekin1999}. 
More precisely, for the metric space $\MHamming=(\Hamming_n,\dHamming)$, called the Hamming metric space, where $\Hamming_n = \{0,1\}^n$ is the $n$-dimensional \textit{Hamming} or \textit{Boolean cube} and $\dHamming$ is the $L_1$  or Hamming distance over $\Hamming_n$, a set $X\subseteq \Hamming_n$ is \emph{$k$-convex} for some $k \geq 0$ integer if for all $x,y \in X$ with $\dHamming(x, y) \leq k$ and for all $z \in \Hamming_n$, $z \in X$ whenever $\dHamming(x,y) = \dHamming(x,z) + \dHamming(z,y)$.

An \emph{(undirected) graph} is a pair $G = (V,E)$, where $V$ is a finite set of vertices and $E\subseteq \{\{u,v\} \subseteq V \}$ is a set of edges;
$V$, $E$, and an edge $\{x,y\} \in E$ will sometimes be denoted by $V(G)$, $E(G)$, and $xy$, respectively.
A graph $G'$ is a \emph{subgraph} of $G$ if $V(G') \subseteq V(G)$ and $E(G') \subseteq E(G)$. 
A \emph{path} of length $n$  for some $n \geq 0$ integer is a graph $P$ with $V(P) = \{v_1,\dots,v_n\}$ and $E(P) = \{v_i v_{i+1} : i \in [n-1]\}$ if $n > 0$; $E(P) = \emptyset$ otherwise.
The \emph{length} of a path is the number of edges it contains.
A graph is \emph{connected} if all pairs of its vertices are connected by a path. 
If two vertices of a graph $G$ are connected by a path, we define their \emph{geodesic distance} by the length of a shortest path connecting them. 
Note that it is a metric on the set of vertices for connected graphs. 
A subset $X \subseteq V(G)$ for a graph $G$ is \emph{geodesically convex} (or simply, \textit{convex}) if $V(P_{uv}) \subseteq X$ for \textit{all} $u, v \in V(G)$ and for \textit{all} shortest paths $P_{uv}$ connecting $u$ and $v$ \citep[see, e.g.,][]{Pelayo2013}. 
For $\theta \in \R_{\geq 0}$ and a finite metric space $(X,D)$, the \textit{$\theta$-neighborhood graph} is the graph $G$ with $V(G) = X$ and $E(G) = \{uv: u,v \in V(G) \text{ and } D(u,v) \leq \theta\}$.

For the standard definitions of \emph{concepts}, \emph{concept classes}, \emph{VC-dimension}, and \emph{polynomial PAC-learnability} from computational learning theory, the reader is referred to some standard text book \citep[see, e.g.,][]{Kearns1994}.
Let $\cC$ be a concept class over some domain $X$ and $k \in \N$.
The \emph{$k$-fold union of $C$} is defined by $\kfoldunion{\cC} = \{C_1 \cup \dots \cup C_k : C_i \in \cC \text{ for all } i \in [k] \}$. 
Note that the definition does \emph{not} require the $C_i$s to be pairwise distinct. 
The following problem is central to concept learning:
\begin{problem}[The Consistent Hypothesis Finding (CHF) Problem]
\label{problem:consistency}
\emph{Given} a concept class $\cC \subseteq 2^X$ over some domain $X$ and disjoint sets $E^+,E^-  \subseteq X$ of positive and negative examples, \emph{return} a concept $C \in \cC$ that is consistent with $E^+$ and $E^-$, i.e., $E^+ \subseteq C$ and $E^- \cap C = \emptyset$ if such a concept exists; otherwise \emph{return} the answer {\sc ``No''}.
\end{problem}
In order to prove polynomial PAC-learnability, we will use the following basic results from computational learning theory~\citep{Blumer1989}:
\begin{theorem}
\label{thm:BEHW}
Let $\cC  \subseteq 2^X$ be a concept class over some domain $X$ with VC-dimension $d > 0$.
\begin{enumerate}[label=(\roman*)]  
\item \label{thm:BEHW:PAC-if-VC-and-CHF}$\cC$ is polynomially PAC-learnable if $d$ is bounded by a polynomial of its parameters and Problem~\ref{problem:consistency} can be solved in time polynomial in the parameters and $\card{E^+\cup E^-}$.
\item \label{thm:BEHW:k-fold-unions} For the VC-dimension of $\cC_\cup^k$ we have $\vcdim\left(\cC_\cup^k\right) \leq 2dk\log(3k)$ for all $k \geq 1$.
\end{enumerate}
\end{theorem}

\section{Weak Convexity in Metric Spaces} 
\label{sec:weakconvexity} 
In this section, we introduce the notion of \emph{weak convexity} in metric spaces and establish some basic formal properties of weakly convex sets that will be utilized in the subsequent sections.
By the most common definition, a set $A \subseteq \Rd$ is \emph{convex}~\citep{Menger1928} if 
\begin{equation}
	\label{eq:triangleeq}
	\dist_2(x,z)+\dist_2(z,y) = \dist_2(x,y) \implies z \in A
\end{equation}
for all $x,y \in A$ and $z \in \Rd$. 
Our notion of weak convexity generalizes (\ref{eq:triangleeq}). 
It is motivated by the fact that convex sets defined by (\ref{eq:triangleeq}) are always ``contiguous'' and cannot therefore capture well-separated regions of the domain that are ``locally'' convex.\footnote{The notion of local convexity in this paper is different from the one used in topology.}   
We address this problem by \emph{adapting} the idea of $k$-convexity over Hamming metric spaces~\citep{Ekin1999} or that of $g_k$-convexity over graphs equipped with the geodesic distance~\citep{Farber1986}\footnote{The notion of $g_k$-convexity~\citep{Farber1986} in graph theory has been used to study different graph classes for which global convexity can be characterized by weak (or local) geodesic convexity \citep[see, e.g.,][]{Chalopin2022,Chepoi1989,Chepoi2007,Chepoi2020}.} to \emph{arbitrary} finite and complete infinite metric spaces. 
In particular, analogously to \citet{Ekin1999} and \citet{Farber1986}, we do \emph{not} require (\ref{eq:triangleeq}) to hold for all points $x$ and $y$, but only for such pairs which have a distance at most a user-specified threshold. 
In other words, while convexity is based on a \emph{global} condition resulting in a single ``contiguous'' region, our notion of weak convexity relies on a \emph{local} one, resulting in potentially several isolated regions, where the spread of locality is controlled by the above mentioned threshold. 
We will be interested in weakly convex hulls of finite sets. 
Since the convex hull of any bounded and closed, in particular, any finite subset of $\Rd$ is closed, we require weakly convex sets to be closed.
These considerations yield the following formal definition of \emph{weak convexity}:
\begin{definition} 
\label{def:weak-convexity} 
	Let $(X,D)$ be a complete metric space. A set $A \subseteq X$ is \emph{$\theta$-convex} (or simply, \emph{weakly convex}) for some $\theta \in \R_{\geq 0}$ if $A$ is closed (i.e., $\cl(A) = A$) and for all $x,y \in A$ and $z \in X$ it holds that $z \in A$ whenever $\dist(x, y) \leq \theta$ and $z \in \ti(x,y)$, where
	\begin{align} \label{eq:almost-triangle-equality-condition}
		\ti(x,y) = \{z\in X: \dist(x, z) + \dist(z, y) = \dist(x, y) \} 
	\end{align}
	denotes the \textit{interval} of the points lying between $x$ and $y$.
\end{definition}
Notice that (\ref{eq:almost-triangle-equality-condition}) implies $x,y \in \ti(x,y)$. Furthermore, it does not require $x \neq y$. 
In particular, $\ti(x,x) = \{x\}$ for all $x \in X$. 
The family of all weakly convex sets is denoted by $\WConvSets[\dist]$; we omit $\dist$ if it is clear from the context. 
The definitions imply $\WConvSets[\dist]<0> = 2^X$.

\begin{figure}[tp]
	\color{black} 
	\centering
	\def\thisscale{0.45}
	\subfloat[small $\theta$]{\label{subfig:weak-convexity-in-R2:small}\begin{tikzpicture}[x=\thisscale cm,y=\thisscale cm]
		\draw[->] (-0.5,-0.75) -- (6,-0.75) node[right] {$x$}; 
		\draw[->] (0,-1.25) -- (0,6) node[above] {$y$}; 
		
		\coordinate (P1) at (1,1); 
		\coordinate (P2) at (2,0.5); 
		
		\coordinate (P3) at (4.75,1.15); 
		\coordinate (P4) at (5.75,1.75); 
		\coordinate (P5) at (5.2,2); 
		\coordinate (P6) at (5.6,2.25); 
		\coordinate (P7) at (4.3,2.5); 
		
		\coordinate (P8) at (3,5.5); 
		\coordinate (P9) at (2,5.1); 
		\coordinate (P10) at (2.5,5.75);
		
		\fill[black!15,dashed] (P8) -- (P10) -- (P9) -- (P1) -- (P2) -- (P3) -- (P4) -- (P6) -- cycle; 
		
		\draw[line width=1pt, tabred] (P10) -- (P8); 
		\fill[tabred] (P4) -- (P5) -- (P6) -- cycle; 
		
		\foreach \i in {1,...,10} {
			\fill (P\i) circle (1pt); 
		}; 
	\end{tikzpicture}} \quad
	\subfloat[medium $\theta$]{\label{subfig:weak-convexity-in-R2:medium}\begin{tikzpicture}[x=\thisscale cm,y=\thisscale cm]
		\draw[->] (-0.5,-0.75) -- (6,-0.75) node[right] {$x$}; 
		\draw[->] (0,-1.25) -- (0,6) node[above] {$y$}; 
		
		\coordinate (P1) at (1,1); 
		\coordinate (P2) at (2,0.5); 
		
		\coordinate (P3) at (4.75,1.15); 
		\coordinate (P4) at (5.75,1.75); 
		\coordinate (P5) at (5.2,2); 
		\coordinate (P6) at (5.6,2.25); 
		\coordinate (P7) at (4.3,2.5); 
		
		\coordinate (P8) at (3,5.5); 
		\coordinate (P9) at (2,5.1); 
		\coordinate (P10) at (2.5,5.75);

		\coordinate (Q1) at (3.375,0.825); 
		
		\fill[black!15,dashed] (P8) -- (P10) -- (P9) -- (P1) -- (P2) -- (P3) -- (P4) -- (P6) -- cycle; 
		\draw[line width=1pt,tabred] (P1) -- (P2) node[black,midway,below] {$A_1$}; 
		\fill[tabred] (P3) -- (P4) -- (P6) -- (P7) node[black,midway,above right] {$A_3$} -- cycle; 
		\fill[tabred] (P8) -- (P9) -- (P10) node[black,midway,above left] {$A_2$} -- cycle; 
		\draw[black!50,|-|] ([shift={(0.1625,-0.6875)}]P2) -- ([shift={(0.1625,-0.6875)}]P3) node[midway,below] {$> \theta$}; 

		\node[below right=-0.5mm] at (P2) {$x$};
		\node[below left=-0.5mm] at (P3) {$y$}; 
		\draw (Q1) circle (1pt) node[above] {$z$};  

		\foreach \i in {1,...,10} {
			\fill (P\i) circle (1pt); 
		}; 
	
	\end{tikzpicture}} \quad 
	\subfloat[large $\theta$]{\label{subfig:weak-convexity-in-R2:large}\begin{tikzpicture}[x=\thisscale cm,y=\thisscale cm]
		\draw[->] (-0.5,-0.75) -- (6,-0.75) node[right] {$x$}; 
		\draw[->] (0,-1.25) -- (0,6) node[above] {$y$}; 
		
		\coordinate (P1) at (1,1); 
		\coordinate (P2) at (2,0.5); 
		
		\coordinate (P3) at (4.75,1.15); 
		\coordinate (P4) at (5.75,1.75); 
		\coordinate (P5) at (5.2,2); 
		\coordinate (P6) at (5.6,2.25); 
		\coordinate (P7) at (4.3,2.5); 
		
		\coordinate (P8) at (3,5.5); 
		\coordinate (P9) at (2,5.1); 
		\coordinate (P10) at (2.5,5.75);

		\coordinate (Q1) at (3.375,0.825); 
		
		\fill[tabred,dashed] (P8) -- (P10) -- (P9) -- (P1) -- (P2) -- (P3) -- (P4) -- (P6) -- cycle; 
		
		\foreach \i in {1,...,10} {
			\fill (P\i) circle (1pt); 
		}; 
		
		\end{tikzpicture}}
	\caption{Examples of $\theta$-convex sets in $\R^2$ for different values of $\theta$.}
	\label{fig:weak-convexity-in-euclidean-plane} 
\end{figure}

To illustrate the notion of weak convexity, consider the finite set of points $A \subseteq \R^2$ in Figure~\ref{subfig:weak-convexity-in-R2:medium}. 
While the convex hull of $A$ is indicated by the gray and red areas, the $\subseteq$-smallest $\theta$-convex set containing $A$ for some suitable $\theta \geq 0$ is drawn in red. The black points also belong to the $\theta$-convex sets in \ref{subfig:weak-convexity-in-R2:small}--\ref{subfig:weak-convexity-in-R2:large}.
The most obvious difference is that there are three separated regions $A_1,A_2$, and $A_3$, instead of a single contiguous area. 
In other words, in contrast to convex sets in $\R^2$, weakly convex sets need \textit{not} be connected. 
This is a consequence of considering only point pairs with distance at most $\theta$. 
For example, the points $x$ and $y$ in Figure~\ref{subfig:weak-convexity-in-R2:medium} have a distance strictly greater than $\theta$, implying that they do not generate $z$. 
Note that in the same way as convex sets, (parts of) weakly convex sets may be degenerated. 
For example, while $A_2$ and $A_3$ are regions with strictly positive area, $A_1$ is just a segment. 
We may even have isolated points (see Figure~\ref{subfig:weak-convexity-in-R2:small}). For sufficiently large $\theta$, the $\subseteq$-smallest $\theta$-convex set containing $A$ becomes equal to the convex hull of $A$ (cf. \ref{subfig:weak-convexity-in-R2:large}).

Despite this unconventional behavior of weakly convex sets, $(X,\WConvSets)$ forms a \emph{closure system}. 
To see this, note  that $\emptyset,X \in \WConvSets$. 
Let $\mathcal{F} \subseteq \WConvSets$ and $x,y \in \bigcap \mathcal{F}$ with $\dist(x, y) \leq \theta$.
Then $\ti(x, y) \subseteq F$ for all $F \in \mathcal{F}$ implying that $\bigcap \mathcal{F}$ is $\theta$-convex. 
Thus, $\WConvSets$ has an associated \emph{closure operator} $\wconv: \power{X} \to \power{X}$ with $A \mapsto \bigcap \{C \in \WConvSets : A \subseteq C\}$ for all $A \subseteq X$. 
That is, $\wconv$ maps a set $A$ to the $\subseteq$-smallest $\theta$-convex set containing $A$. 
It is called the \emph{weakly convex hull operator} and its fixed points (i.e., the $\wconv$-closed sets) form exactly $\WConvSets$. 

\subsection{Some Basic Properties of Weakly Convex Sets}
\label{sec:weakly-convex-properties}
We now present some basic properties of weakly convex sets that make them especially interesting for machine learning from a practical as well as from a theoretical viewpoint. 
%
%
As already mentioned, weakly convex sets need \emph{not} be ``contiguous'' (cf. Figure~\ref{fig:weak-convexity-in-euclidean-plane}), in contrast to, e.g., convex sets in Euclidean spaces. 
Instead, one can observe regions that are \textit{separated} from each other, due to the fact that 
weak convexity utilizes a distance threshold $\theta$. 
In Theorem~\ref{thm:weak-convex-decomposition} below 
we formally state this property of weakly convex sets.
We note that this result generalizes that for the Hamming metric space \citep[cf. Proposition 3.2 in][]{Ekin2000} to complete metric spaces.

We first introduce some necessary notions. 
Let $\M= (X,\dist)$ be a metric space, $\theta \geq 0$, and $C \subseteq X$.
Two points $a,b \in C$ are \emph{$\theta$-connected} in $C$, denoted $a \connected b$, if there is a finite sequence $a = p_1, p_2, \dots, p_r = b \in C$ such that $\dist(p_i, p_{i+1}) \leq \theta$ for all $i \in [r-1]$. 
$C$ is \emph{$\theta$-connected} if $a \connected b$ for all $a, b \in C$. 
Note that $\connected$ is an equivalence relation on $C$; the equivalence class of $a$ is denoted by $\conncomp{a}$ (i.e., $\conncomp{a} = \{b \in C: a \connected b\}$) for all $a \in C$.

\begin{theorem} \label{thm:weak-convex-decomposition} 
	Let $(X,\dist)$ be a complete metric space, $\theta \geq 0$, and $C$ be a subset of $X$ that is finite if $\theta =0$. 
	Then $C$ is $\theta$-convex if and only if there is a unique family of non-empty sets $(B_i \subseteq C)_{i \in I}$ for some index set $I$ that satisfies the following conditions: 
	\begin{enumerate}[label=(\roman*)] 
		\item \label{proof-enum:weak-convex-disjoint-union-components}$C = \bigcup_{i \in I} B_i$, 
		\item \label{proof-enum:components-weakly-convex}$B_i$ is $\theta$-convex for all $i \in I$, 
		\item \label{proof-enum:components-theta-connected}$B_i$ is $\theta$-connected for all $i \in I$, 
		\item \label{proof-enum:components-distance} for all $i,j \in I$ with $i \neq j$, $\dist(a, b) > \theta$ for all $a \in B_i, b \in B_j$.
	\end{enumerate}
\end{theorem}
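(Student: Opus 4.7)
\medskip
\noindent\textbf{Proof plan.}
The natural candidate for the decomposition is, of course, the partition of $A$ into its $\theta$-connected components. So my plan is to define, in the forward direction, $(A_i)_{i \in I}$ to be the equivalence classes of $\connected$ on $A$, and then verify the four conditions one by one. Conditions \ref{proof-enum:weak-convex-disjoint-union-components} and \ref{proof-enum:components-theta-connected} are immediate from the definition of the equivalence relation $\connected$, and \ref{proof-enum:components-distance} follows by contraposition: if $a \in A_i$, $b \in A_j$ and $\dist(a,b) \leq \theta$, then the one-step sequence $a,b$ witnesses $a \connected b$, forcing $i=j$.

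The only condition that requires the $\theta$-convexity hypothesis is \ref{proof-enum:components-weakly-convex}, and I expect this to be the technical heart of the argument. Fix $i \in I$ and take $x,y \in A_i$ with $\dist(x,y) \leq \theta$ and $z \in \ti(x,y)$. Since $A$ is $\theta$-convex we already get $z \in A$, so it only remains to argue $z \in A_i$, i.e.\ $z \connected x$. The key observation is that from $\dist(x,z) + \dist(z,y) = \dist(x,y) \leq \theta$ and the non-negativity of the metric we obtain $\dist(x,z) \leq \theta$, so the one-step chain $x,z$ inside $A$ already establishes $x \connected z$ and hence $z \in A_i$.

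For the converse direction, assume the decomposition $(A_i)_{i \in I}$ satisfies \ref{proof-enum:weak-convex-disjoint-union-components}--\ref{proof-enum:components-distance}, and let $x,y \in A$, $z \in X$ with $\dist(x,y) \leq \theta$ and $z \in \ti(x,y)$. By \ref{proof-enum:weak-convex-disjoint-union-components} there are indices $i,j$ with $x \in A_i$, $y \in A_j$; condition \ref{proof-enum:components-distance} together with $\dist(x,y) \leq \theta$ forces $i = j$, and then $\theta$-convexity of $A_i$ (condition \ref{proof-enum:components-weakly-convex}) gives $z \in A_i \subseteq A$. Thus $A$ is $\theta$-convex.

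Finally, for uniqueness, I would argue that any family $(A_i)_{i \in I}$ satisfying \ref{proof-enum:weak-convex-disjoint-union-components}--\ref{proof-enum:components-distance} must coincide with the partition of $A$ into $\connected$-equivalence classes. Indeed, \ref{proof-enum:components-theta-connected} implies that any two points in the same block are $\connected$-related, while \ref{proof-enum:components-distance} implies that no chain of steps of length at most $\theta$ inside $A$ can cross from one block to another (by induction on chain length, using that each individual step has length $\leq \theta$ and hence stays inside a single block). Hence each $A_i$ is exactly one equivalence class of $\connected$, which pins down the family up to the indexing. The main subtlety I anticipate is simply making sure the implication $\dist(x,y) \leq \theta \Rightarrow \dist(x,z) \leq \theta$ in the triangle-equality step is stated cleanly; everything else is bookkeeping with the equivalence relation.
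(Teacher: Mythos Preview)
Your proposal is correct and follows essentially the same approach as the paper: both take the $\connected$-equivalence classes as the decomposition, verify (i)--(iv) directly, and establish uniqueness by showing any admissible family must coincide with these classes. Your verification of (ii) via $\dist(x,z)\leq\dist(x,y)\leq\theta$ is in fact slightly more direct than the paper's contradiction argument, but the underlying idea is the same.
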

\begin{proof}
    The case $\theta = 0$ is trivial, so we assume $\theta > 0$. 
	We first show the equivalence stated in the theorem.
	For the ``if'' direction, suppose conditions~\ref{proof-enum:weak-convex-disjoint-union-components}--\ref{proof-enum:components-distance} hold for a family $(B_i)_{i \in I}$. 
	To show that $C$ is $\theta$-convex, let $x, y \in C$ with $\dist(x,y) \leq \theta$. 
	Then, by \ref{proof-enum:weak-convex-disjoint-union-components} and \ref{proof-enum:components-distance}, $x,y \in B_i$ for some $i \in I$. 
	Let $z \in \ti(x, y)$. 
	By \ref{proof-enum:components-weakly-convex} we have $z \in B_i$ and thus $z \in C$ by \ref{proof-enum:weak-convex-disjoint-union-components}. Any convergent sequence in $C$ is almost completely contained in some $B_i$ for $i \in I$; this follows from \ref{proof-enum:components-distance} and $\theta > 0$. 
    Since $B_i$ is closed, its limit point is also contained in $B_i$ and therefore in $C$ by \ref{proof-enum:weak-convex-disjoint-union-components}. Hence, $C$ is $\theta$-convex. 
	
	For the ``only if'' direction, assume that $C$ is $\theta$-convex. 
	Let $(a_i \in C)_{i \in I}$ denote a complete set of representatives of $\connected$ for some index set $I$. Let $B_i = \conncomp{a_i}$ for all $i \in I$. 
	  By construction, $(B_i)_{i \in I}$ satisfies \ref{proof-enum:weak-convex-disjoint-union-components},  \ref{proof-enum:components-theta-connected}, and \ref{proof-enum:components-distance}.
	In particular, \ref{proof-enum:components-distance} follows from $\theta > 0$ and the fact that for all $a,b \in C$, $\dist(a,b) \leq \theta$ implies $\conncomp{a} = \conncomp{b}$. 
	Thus, $\dist(a,b) > \theta$ for all $i \neq j, a \in B_i$, and $b \in B_j$.	
	To see that $B_i$ fulfills \ref{proof-enum:components-weakly-convex}, let $x, y \in B_i$ for some $i \in I$ such that $\dist(x, y) \leq \theta$, and let $z \in \ti(x, y)$. 
	Suppose for contradiction, that $z \notin B_i$. 
	Then by the $\theta$-convexity of $C$ and \ref{proof-enum:weak-convex-disjoint-union-components} we have that $z \in B_j$ for some $j \neq i$ and by \ref{proof-enum:components-distance} that  $\dist(x,z),\dist(z,y) > \theta$. 
	Therefore, 
	\[ 0 = \dist(x,z) + \dist(z,y) - \dist(x,y) > \theta \enspace , \]
	contradicting $\theta > 0$.
	Hence, $z \in B_i$. 
    Finally, since $C$ is closed, every convergent sequence in $B_i$ has a limit point by \ref{proof-enum:weak-convex-disjoint-union-components}, which lies in $B_i$ by \ref{proof-enum:components-distance} and $\theta > 0$. This completes the proof of \ref{proof-enum:components-weakly-convex}.
	
	It remains to show that $(B_i)_{i \in I}$ is \emph{unique} with respect to \ref{proof-enum:weak-convex-disjoint-union-components}--\ref{proof-enum:components-distance}. 
	Let $(B'_j)_{j \in J}$ be a family of non-empty sets that satisfies \ref{proof-enum:weak-convex-disjoint-union-components}--\ref{proof-enum:components-distance}. 
    Let $r \in J$, $x \in B'_r$, and $s \in I$ such that $x \in B_s$. 
    We claim that $B'_r = B_s$. 
    We only show $B'_r \subseteq  B_s$; the proof of $B_s \subseteq  B'_r$ is analogous.
	Suppose for contradiction that $B'_r \nsubseteq  B_s$ and let $a \in B'_r \setminus B_s$. 
	Then, by \ref{proof-enum:components-theta-connected}, there is a finite sequence $x = p_1,\dots,p_t = a \in B'_r$ with $\dist(p_i,p_{i+1}) \leq \theta$ for all $i \in [t-1]$. 
	It must be the case that there is an $i \in [t-1]$ such that $p_i \in B_s$ and $p_{i+1} \in B'_r \setminus B_s$. 
	But then, since $p_{i+1} \notin B_s$, $\dist(p_i,p_{i+1}) > \theta$ because $(B_i)_{i \in I}$ satisfies \ref{proof-enum:weak-convex-disjoint-union-components} and \ref{proof-enum:components-distance}, which is a contradiction. 
	Hence, $B'_r = B_s$.
	Thus, for all $j \in J$ there is an $i \in I$ such that $B'_j = B_i$, implying the uniqueness. 
\end{proof}
In what follows, the family $(B_i)_{i \in I}$ satisfying conditions~\ref{proof-enum:weak-convex-disjoint-union-components}--\ref{proof-enum:components-distance} in Theorem~\ref{thm:weak-convex-decomposition} will be referred to as the \emph{$\theta$-decomposition} of the $\theta$-convex set $C$. 
Furthermore, the sets $B_i$ in the $\theta$-decomposition of $C$, denoted $\blocks<\theta>(C)$, will be called \emph{$\theta$-blocks} or simply, \emph{blocks}.
We will omit $\theta$ from the notation and simply write $\blocks(C)$ if $\theta$ is clear from the context.
In particular, we write $\blocks(\wconv(A))$ instead of $\blocks<\theta>(\wconv(A))$ for any $A \subseteq X$. 
In Proposition~\ref{pr:distance_existence} below we formulate a basic property of the blocks in a $\theta$-decomposition.   
\begin{proposition} 
	\label{pr:distance_existence}
	Let $\M=(X,\dist)$ be a complete metric space, $\theta \geq 0$, and $A \in \finitesubsets$.
	Then for all $B_1,B_2\in \blocks(\wconv(A))$, there are $a \in B_1$ and $b \in B_2$ such that
	$$\dist(B_1,B_2) = D(a,b)\enspace.$$
\end{proposition}
\begin{proof}
It follows from the property that all blocks are non-empty and $\theta$-convex by Theorem~\ref{thm:weak-convex-decomposition} and hence closed.
\end{proof}

Finally, we claim that the weakly convex hull operator is monotone with respect to $\theta$ and establish a connection between weak and ordinary 
convexity. 
For a metric space $(X,\dist)$ and $A\subseteq X$, let $\conv(A)$ denote the closed convex hull of $A$.
\begin{proposition}
	\label{prop:monotonicity-of-hulls-wrt-parameters} 
	Let $(X,\dist)$ be a complete metric space, $0 \leq \theta \leq \theta' < \infty$, and let $A$ be a subset of $X$ that is finite if $\theta = 0$. Then
	\begin{enumerate}[label=(\roman*)]
		\item \label{prop:monotonicity:monotone-wrt-theta} $\wconv(A) \subseteq \wconv<\theta'>(A) \subseteq \conv(A)$,
		\item \label{prop:monotonicity:block-cooccurrence} for all $x,y \in \wconv(A)$, $x, y$ are in the same block of $\wconv<\theta'>(A)$ if they are in the same block of $\wconv(A)$.
	\end{enumerate} 
\end{proposition}
\begin{proof} 
    The claim is trivial if $\theta = 0$. Assume that $\theta > 0$.
	Regarding \ref{prop:monotonicity:monotone-wrt-theta}, the proof of the second containment is trivial.
The first one follows from the fact that the family of $\theta'$-convex sets containing $A$ is a subfamily of that of $\theta$-convex sets containing $A$. 
Indeed, any $\theta'$-convex set $C$ is $\theta$-convex, as $\ti(x, y) \subseteq C$ for all $x, y \in C$ with $\dist(x, y) \leq \theta \leq \theta'$. 
	To prove \ref{prop:monotonicity:block-cooccurrence}, let $x,y \in \wconv(A)$ that belong to the same $\theta$-block in $\wconv(A)$. 
	Then $\theta > 0$ and \ref{proof-enum:components-theta-connected} of Theorem~\ref{thm:weak-convex-decomposition} imply $x \connected<\theta>[\wconv(A)] y$.
	By $\theta \leq \theta'$ and \ref{prop:monotonicity:monotone-wrt-theta}, we also have $x \connected<\theta'>[\wconv<\theta'>(A)] y$. 
    Hence, $x, y$ lie in the same $\theta'$-block of $\wconv<\theta'>(A)$. 
\end{proof} 

\begin{figure}[tp]
    \centering
    \subfloat[$\theta=8$]{
        \centering
        \label{subfig:illustrative-large-graph:small-theta}
        \includegraphics[width=0.3\linewidth]{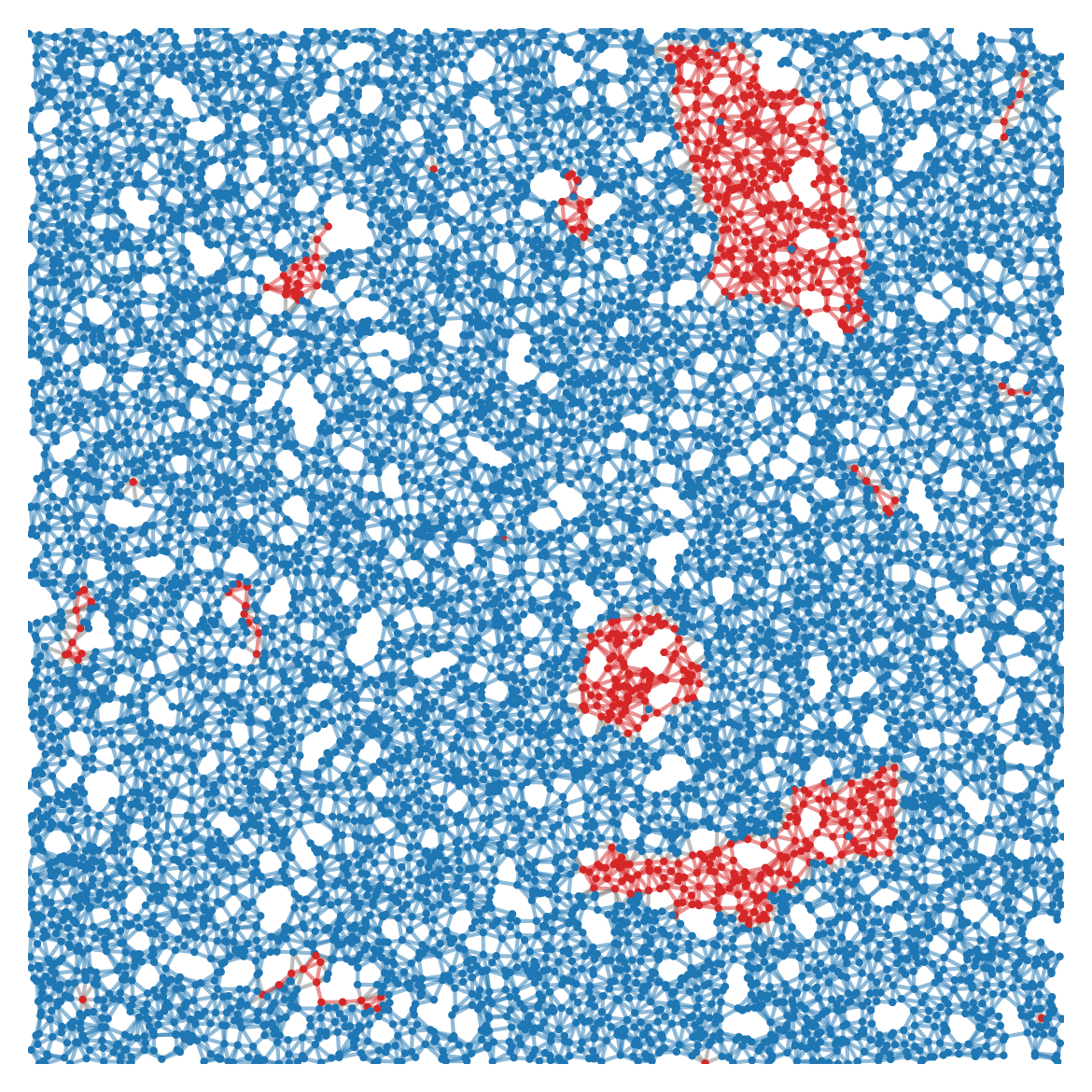}
    }
    \subfloat[$\theta = 15$]{
        \centering
        \label{subfig:illustrative-large-graph:medium-theta}
        \includegraphics[width=0.3\linewidth]{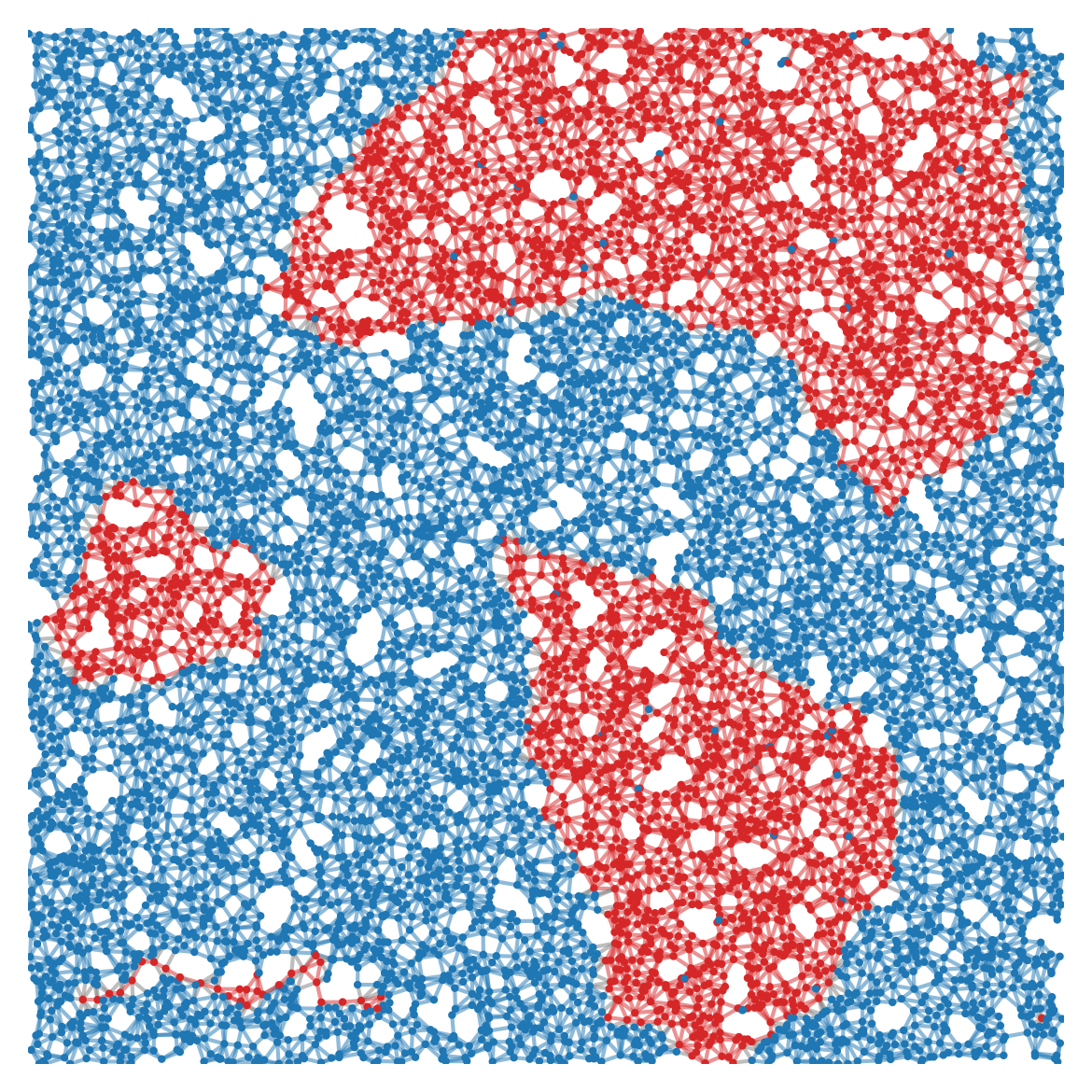}
    } 
    \subfloat[$\theta = 114$]{
        \centering
        \label{subfig:illustrative-large-graph:large-theta}
        \includegraphics[width=0.3\linewidth]{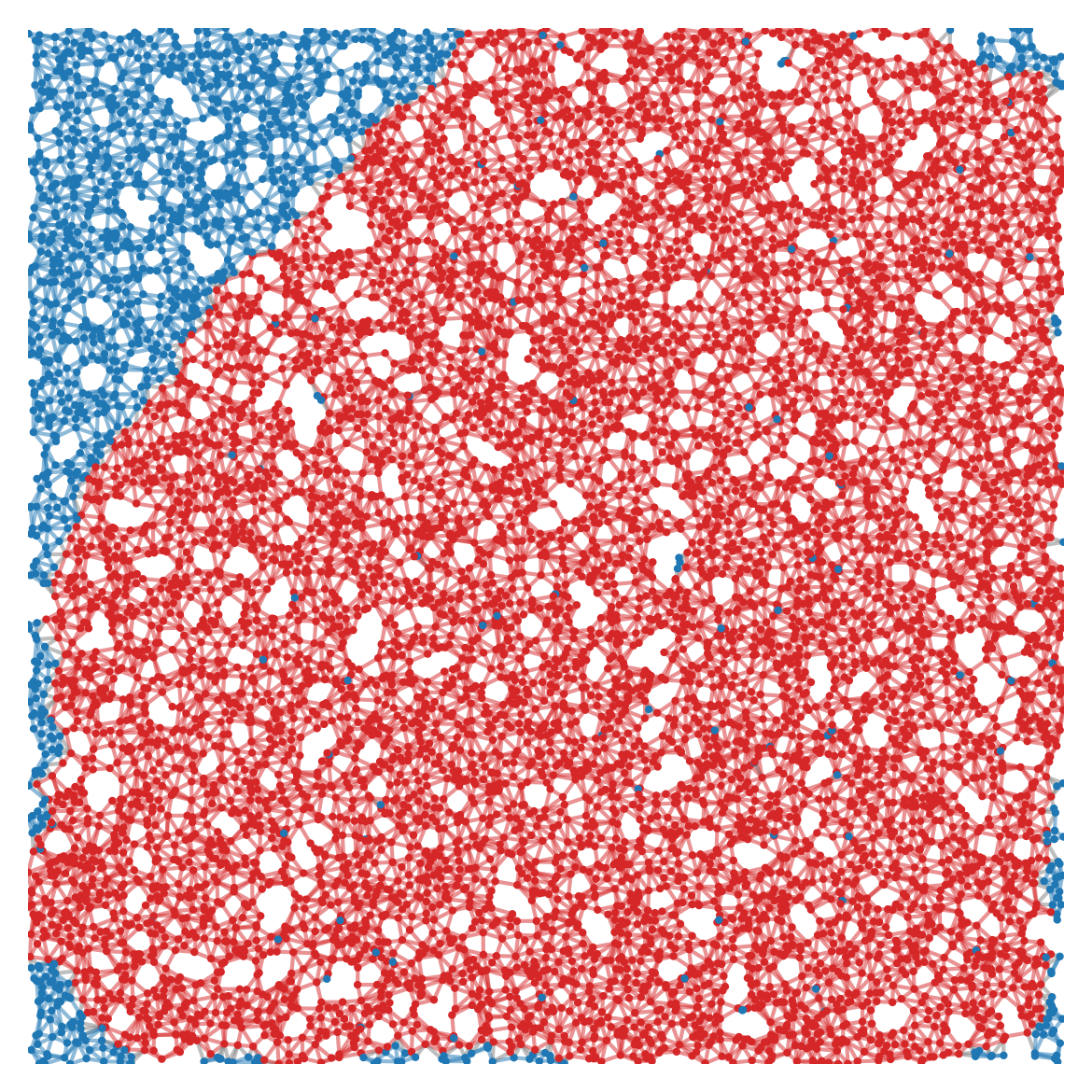}
    }
    \caption{The geodesic $\theta$-convex hulls (in \textcolor{tabred}{\textbf{red}}) of a set of $40$ vertices for $\theta=8, 15, \text{ and } 114$ in a graph with $10,000$ vertices.
    }
    \label{fig:illustrative-large-graph}
\end{figure}

Thus, for monotonically increasing $\theta$s, the weakly convex hulls of a set $A$ form a \textit{monotone chain}, with a maximum element defined by the convex hull of $A$.
In Figure~\ref{fig:illustrative-large-graph} we present an example of this property for the case that the domain is the vertex set of a graph and the distance is the geodesic (or shortest-path) distance.
The graph in this example, used also in our experimental evaluation, has 10,000 vertices.
We show three weakly convex subsets (color red) for $\theta = 8, 15, \text{ and } 114$ (the diameter of the graph).
They form the weak convex hulls of the same set of 40 vertices. 
A closer look at the figure shows that the weakly convex hull in (a) is a subset of that in (b), which, in turn, is a subset of the convex hull in (c), in accordance with Proposition~\ref{prop:monotonicity-of-hulls-wrt-parameters}. 
This property will be utilized in the next sections concerning learning weakly convex sets.

\begin{remark}
\label{rem:circle-not-blockwise-convex} 
We note that a block of a weakly convex set is \emph{not} necessarily convex. 
To see this, let $\theta \geq 2$ be an integer and consider the graph $C$ consisting of a single cycle of length $2\theta+3$ with the geodesic distance $D$ as metric. 
Let $A=\{v_1,\ldots,v_k\} \subseteq V(C)$ such that $D(v_i,v_{i+1}) \leq \theta$ for all $1\leq i< k$, $D(v_1,v_k) = \theta+1$, and the shortest path $P_{v_1,v_k}$ connecting $v_1$ and $v_k$ does not contain any vertices from $A \setminus \{v_1,v_k\}$. 
It follows that $\wconv(A)$ consists of a single block. 
Furthermore, while $\conv(A) = V(C)$, $\wconv(A)$ does not contain the interior points of $P_{v_1,v_k}$.
Thus, $\wconv(A)$ is not convex.
\end{remark}

\section{The Generic CHF Algorithm} 
\label{sec:intensional-setting} 
In this section we present our general algorithm for solving the CHF problem for weakly convex hypothesis classes over a  broad class of metric spaces.
As long as $\theta$ is fixed, the CHF problem for a $\theta$-convex hypothesis class $\cC_\theta$ can be solved by computing the $\theta$-convex hull of the positive examples.
If, however, $\theta$ is not given in advance, which is a realistic scenario, we need to compute a solution from those of the CHF problems for $\cC_\theta$ for all $\theta \geq 0$ that is \textit{optimal} with respect to some criterion.
Such an optimality criterion could be defined by the number of blocks in the weakly convex hull. 
This criterion leads, however, to solving a computationally intractable problem (see Section~\ref{sec:conclusion} for a discussion). 
We therefore restrict the set of feasible solutions to the $\theta$-convex hulls of the positive examples for all $\theta \geq 0$ and return the \textit{most general} consistent hypothesis~\citep{Mitchell1982} defined by the \textit{largest}  weakly convex hull of the positive examples for some $\theta$ that is disjoint with the negative examples. 
Out of the consistent weakly convex hulls, it is the closest approximation of the convex hull of the positive examples.


We consider the case that the hypotheses are given by some  representation and refer to this scenario as the \textit{intensional} problem setting.
In order to formulate the CHF problem, we first  define the notion of representation schemes.  
The definition below utilizes that  weakly convex sets give rise to a unique block decomposition (cf. Theorem~\ref{thm:weak-convex-decomposition}).
More precisely, let $\M= (X,\dist)$ be a complete metric space and $\tau \geq 0$. 
A \textit{representation scheme} for  $\M$ and $\tau$ is a function $\repwconv: \R_{\geq \tau} \times \finitesubsets \to 2^{\{0,1\}^*}$ with
\begin{equation}
	\label{eq:representation_mu}
	\repwconv[A] \mapsto \{\blockrepwconv[B \cap A]: B \in \blocks(\wconv(A))\}
\end{equation}
for all $\theta \geq\tau$ and $A \in \finitesubsets$, where 
$\blockrepwconv: \R_{\geq \tau} \times \finitesubsets \to \{0,1\}^* \cup \{\bot\}$ is a function such that for all $\theta',\theta'' \geq \tau$ and for all $A',A'' \in \finitesubsets$ it satisfies
\begin{align}
	\blockrepwconv<\theta'>[A'] \in \{0,1\}^* &\iff \card{\blocks(\wconv<\theta'>(A'))}=1 
	\label{enum:singleton-block-representations}
	\intertext{and if $|\blocks(\clop_{\theta'}(A'))| = |\blocks(\clop_{\theta''}(A''))| =1$ then}
	\wconv<\theta'>(A') = \wconv<\theta''>(A'') & \iff \blockrepwconv<\theta'>[A'] = \blockrepwconv<\theta''>[A''] \enspace .
	\label{enum:block-representations-coherent}
\end{align}
In other words, $\repwconv$ returns some \textit{unique} representation of weakly convex hulls of \textit{finite} subsets of the domain using some representation $\blockrepwconv$ of weakly convex blocks.
The definition above is correct, as $\blockrepwconv[B \cap A] \in \{0,1\}^*$ for all blocks $B$ in (\ref{eq:representation_mu}).
For $R = \repwconv[A]$, the extension of $R$ (i.e., $\wconv(A)$) is denoted by $\ext(R)$.  
For $\M$, $\tau$, and $\repwconv$, define the order $\preccurlyeq$ on the set of representations of weakly convex sets as follows: For all $A,B \in \finitesubsets$ and for all $\theta_1,\theta_2 \geq \tau$, $\repwconv<\theta_1>[A] \preccurlyeq \repwconv<\theta_2>[B]$ if and only if $\wconv<\theta_1>(A) \subseteq \wconv<\theta_2>(B)$.  
Clearly, $\preccurlyeq$ is a partial order.
Using the above notions, we are ready to define the CHF problem for the intensional setting. 
The supremum in the definition below is taken with respect to the relation $\preccurlyeq$. 
\begin{problem}
	\label{problem:intCHF}
	\textit{Given} a complete metric space $\M= (X,\dist)$, a representation scheme $\repwconv$ for $\M$ and some $\tau \geq 0$, and disjoint sets $E^+,E^- \in \finitesubsets$ of labeled examples with $E^+ \neq \emptyset$,
	\textit{return} 
	$$
	\sup_{\theta \geq \tau} \{\repwconv[E^+] : \wconv(E^+) \cap E^- = \emptyset \} 
	$$
	if such a $\theta$ exists; otherwise return ``\symbolno{}''.
\end{problem}

To present our solution to Problem~\ref{problem:intCHF}, we need a restriction on complete metric spaces.
Section~\ref{sec:k-convexity} below is concerned with learning weakly convex Boolean functions in the Hamming metric space $(\Hamming_n,\dHamming)$. 
For $\theta = 1$, \textit{all} subsets 
of $\Hamming_n$ are $1$-convex. 
Thus, to represent any of the $2^{2^n}$ $1$-convex subsets, we need $\Omega(2^n)$ bits, implying that there is \textit{no} compact representation of $1$-convex sets.
One of the problems is that the blocks of $1$-convex subsets of $\Hamming_n$ are not convex in general.
%
To overcome this problem, we require, in addition to completeness, the metric space to satisfy the blockwise convexity property, defined as follows:
A metric space $\M=(X,\dist)$ is \textit{blockwise convex} for some $\tau \geq 0$, if it is complete and for all $\tau$-convex sets $C \subseteq X$ with $C = \wconv<\tau>(A)$ for some $A \in \finitesubsets$, $C$ is \textit{convex} whenever it is $\tau$-connected. In other words, all blocks of the $\tau$-convex hull of a finite set are convex. 
The definitions imply that if $\M$ is blockwise convex for some $\tau \geq 0$, then it is blockwise convex for all $\tau' \geq \tau$.
In the lemma below we first present some basic properties of of weakly convex hulls in blockwise convex metric spaces.
\begin{lemma}
	\label{lm:wellbehaved_and_blockwiseconvex}
	Let $\M=(X,D)$ be a blockwise convex metric space for some $\tau \geq 0$, $A \in \finitesubsets$ with $A \neq \emptyset$, $\theta_0 = 0$ and $\theta_{i} = \max\{\tau,\min\{D(B_1,B_2) > 0: B_1,B_2 \in \blocks(\wconv<\theta_{i-1}>(A))\}\}$ for all $i \in[k]$, where $k$ is the smallest integer satisfying $\card{\blocks(\wconv<\theta_k>(A))}=1$. 
	Then $k \leq \card{A}$ and for all $i \in [k-1]$,
	\begin{enumerate}[label=(\roman*)]
		\item \label{lm:wellbehaved_and_blockwiseconvex:next-level-is-different} $\wconv<\theta_i>(A)\subseteq \wconv<\theta_{i+1}>(A)$,
		\item \label{lm:wellbehaved_and_blockwiseconvex:same-level-is-same} $\wconv<\theta_i>(A)=\wconv<\theta_i'>(A)$ for all $\theta_i' \in [\theta_i,\theta_{i+1})$, 
		\item \label{lm:wellbehaved_and_blockwiseconvex:convex-is-last} $\wconv<\theta_k>(A)=\wconv<\theta_k'>(A)$ for all $\theta_k' \geq \theta_k$ \enspace .
	\end{enumerate}
\end{lemma}
\begin{proof}
	We first prove that the $\theta_i$s in the claim fulfill \ref{lm:wellbehaved_and_blockwiseconvex:next-level-is-different}--\ref{lm:wellbehaved_and_blockwiseconvex:convex-is-last} for all $i\in[k]$.
	Let $i \in [k-1]$. 
    The proof of \ref{lm:wellbehaved_and_blockwiseconvex:next-level-is-different} follows directly from Proposition~\ref{prop:monotonicity-of-hulls-wrt-parameters} because $\theta_i < \theta_{i+1}$. 
	Regarding \ref{lm:wellbehaved_and_blockwiseconvex:same-level-is-same}, 
	we claim that $\wconv<\theta_i>(A)$ satisfies all conditions of Theorem~\ref{thm:weak-convex-decomposition} for all $\theta_i' \in [\theta_i,\theta_{i+1})$, and hence, $\wconv<\theta_i>(A)$ is $\theta_i'$-convex. This implies $\wconv<\theta_i>(A) \supseteq \wconv<\theta_i'>(A)$, from which we get \ref{lm:wellbehaved_and_blockwiseconvex:same-level-is-same} by \ref{lm:wellbehaved_and_blockwiseconvex:next-level-is-different}.
	To show this claim, note that \ref{proof-enum:weak-convex-disjoint-union-components} and \ref{proof-enum:components-theta-connected} of Theorem~\ref{thm:weak-convex-decomposition} hold trivially, \ref{proof-enum:components-weakly-convex} by blockwise convexity, and \ref{proof-enum:components-distance} by $\theta_i' < \theta_{i+1}$, together with the definition of $\theta_{i+1}$.
	Finally, the proof of \ref{lm:wellbehaved_and_blockwiseconvex:convex-is-last} of the lemma is automatic, as $\card{\blocks(\wconv<\theta_k>(A))}=1$ and hence, it is convex by blockwise convexity. 
	The proof of $k \leq \card{A}$ follows from $\card{\blocks(\wconv<\theta_0>(A))} = \card{A}$ and from $\card{\blocks(\wconv<\theta_i>(A))} > \card{\blocks(\wconv<\theta_{i+1}>(A))}$ ($1 \leq i < k$).
\end{proof}

%
%
\begin{algorithm}[t]
	\begin{algorithmic}[1]
		\Require blockwise convex metric space $\M= (X,\dist)$ for some $\tau \geq 0$ and representation scheme $\repwconv$ for $\M$ and $\tau$
		\Input disjoint sets $E^+,E^- \in \finitesubsets$ with $E^+ \neq \emptyset$	
		\Output $\repwconv[E^+]$ such that $\theta \geq \tau$, $\wconv(E^+)\cap E^- = \emptyset$, and $\wconv<\theta'>(E^+)\cap E^- \neq \emptyset$ for all $\theta' > \theta$ satisfying $\wconv<\theta'>(E^+) \supsetneq \wconv(E^+)$ if such a $\theta$ exists; ``\symbolno{}'' otherwise
		\Statex 
		\State $\decomp_0 \gets \{\Call{Singleton}{x} : x \in E^+\}$, $i \gets 0$ \hfill // cf. (\ref{eq:singleton})
    \label{line:init}
		\While{$\card{\decomp_{i}} > 1$}
        \label{line:start_outer_loop}
		\State $i \gets i+1$ 
		\State $\theta_{i} = \max\{\tau,\min \{\fn{Distance}(R_1,R_2): R_1,R_2 \in \decomp_{i-1}, \ R_1 \neq R_2\}\}$ 		\hfill // cf. (\ref{eq:distance}) 
        \label{line:next_theta}
		\State $\decomp \gets \decomp_{i-1}$ 
        \label{line:set_R}
		\While{$\exists R_1,R_2 \in \decomp$ with  $R_1 \neq R_2$ and $\Call{Distance}{R_1,R_2} \leq \theta_{i}$}
    \label{line:inner_while_loop}
		\State {$\decomp \gets (\decomp \setminus \{R_1,R_2\}) \cup \{R\}$ with $R = \Call{Join}{\theta_{i},R_1,R_2}$} \hfill // cf. (\ref{eq:join})	
        \label{line:join}
		\EndWhile
        \label{line:end_inner_while_loop}
        \State {$\decomp_i \gets \decomp$}
        \label{line:set_R_i}
		\If{$\exists e \in E^-$ and $R\in \decomp_i \setminus \decomp_{i-1}$ such that $\fn{Membership}(e,R) = \symboltrue{}$}
        \label{line:start_consistency}
		\If{$i=1$ and $\theta_i = \tau$}
		\Return ``\symbolno''
		\Else \	\Return $\decomp_{i-1}$
		\EndIf		
		\EndIf
        \label{line:end_consistency}
		\EndWhile
        \label{line:end_outer_loop}
		\State \Return $\decomp_i$ 
		%
	\end{algorithmic}
	\caption{\sc Intensional Consistent Hypothesis Finding} 
	\label{alg:intCHF}
\end{algorithm}
%
We are ready to present our general domain-independent algorithm (see Algorithm~\ref{alg:intCHF}) for solving Problem~\ref{problem:intCHF} for blockwise convex metric spaces. 
It utilizes the property that the $\theta$-convex hulls of the positive examples form an ascending chain for increasing $\theta$s and hence, the representations of any two weakly convex hulls are comparable with respect to $\preccurlyeq$ defined above.
%

For disjoint finite sets $E^+,E^- \subseteq X$ of examples, Algorithm~\ref{alg:intCHF} first computes in $\decomp_0$ the set of representations of the singleton blocks containing $x$ for all $x \in E^+$ (line~\ref{line:init}), where function \fn{Singleton} is defined by
\begin{equation}
	\label{eq:singleton}
	\fn{Singleton}(x) = \blockrepwconv<0>[\{x\}] \enspace .
\end{equation}


In each iteration $i$ of the outer loop (lines~\ref{line:start_outer_loop}--\ref{line:end_outer_loop}), the algorithm computes in $\decomp_i$ a new set of block representations from those in $\decomp_{i-1}$ (lines~\ref{line:set_R}--\ref{line:set_R_i}).
In particular, it takes the \textit{smallest} pairwise distance $\theta_{i}$ between the blocks in $\decomp_{i-1}$ if it is greater than $\tau$; otherwise $\tau$ (cf. line~\ref{line:next_theta}). 
More precisely, function \fn{Distance} called in line~\ref{line:next_theta} with valid (block) representations $R_1,R_2 \in \{0,1\}^*$ is defined by
\begin{equation}
	\label{eq:distance}
	\fn{Distance}(R_1,R_2) = D(\ext(R_1),\ext(R_2)) 
	\enspace .
\end{equation}
We note that (\ref{eq:distance}) is the \textit{semantic} definition of \fn{Distance}; it is assumed that the distance between $\ext(R_1)$ and $\ext(R_2)$ can be calculated \textit{directly} from their representations $R_1,R_2$.

The algorithm then sets $\decomp$ to $\decomp_{i-1}$ and in the inner loop (lines~\ref{line:inner_while_loop}--\ref{line:end_inner_while_loop}) it iteratively joins \textit{all} pairs of block representations in $\decomp$ that have distance at most $\theta_{i}$, including also those that arise in the inner loop.
In Lemma~\ref{lm:IWCH} we claim that $\decomp_i = \repwconv<\theta_i>[E^+]$ for $\decomp_i$ in line~\ref{line:set_R_i}.
To prove this, in Proposition~\ref{pr:join} we first state that the $\theta$-convex hull of two blocks with distance at most $\theta$ is always a single block.
\begin{proposition}
	\label{pr:join}
	Let $\M=(X,D)$ be a complete metric space and $B_1,B_2 \in \blocks(\wconv<\theta'>(A))$ for some $A \in \finitesubsets$ and $\theta' \geq 0$. Let $\theta > 0$ such that $D(B_1,B_2)\leq \theta$.
	Then for $B = \wconv(B_1 \cup B_2)$ it holds that 
	\begin{enumerate}[label=(\roman*)]
		\item \label{pr:join:generators-determine-blocks} $B = \wconv(A\cap (B_1\cup B_2))$ and
		\item \label{pr:join:result-is-single-block} $B$ is a block (i.e., it is $\theta$-connected).
	\end{enumerate}
\end{proposition}	
\begin{proof}
	The claim is trivial if $\theta \leq \theta'$, as $B_1=B_2$ for this case by Theorem~\ref{thm:weak-convex-decomposition}. 
	Consider the case that $\theta > \theta'$. 
	The proof of \ref{pr:join:generators-determine-blocks} is straightforward.
	Since $B_1,B_2$ are $\theta'$-connected by Theorem~\ref{thm:weak-convex-decomposition}, $B_1\cup B_2$ is $\theta$-connected, from which \ref{pr:join:result-is-single-block} follows by noting that the $\theta$-convex hull of a $\theta$-connected set is $\theta$-connected.
\end{proof}
Using induction on $i$, Proposition~\ref{pr:join} implies that if $D(\ext(R_1),\ext(R_2))$ for $R_1,R_2$ in line~\ref{line:join} is at most $\theta_i$, then 
\begin{equation}
\label{eq:join_generators}
\wconv<\theta_i>(\ext(R_1) \cup \ext(R_2)) = \wconv<\theta_i>(E^+ \cap (\ext(R_1) \cup \ext(R_2)))    
\end{equation}
consists of a \textit{single} block, giving rise to the following definition of \fn{Join} in line~\ref{line:join}:
\begin{equation}
	\label{eq:join}
	\Call{Join}{\theta_i,R_1,R_2} = \blockrepwconv<\theta_i>[E^+ \cap (\ext(R_1)\cup\ext(R_2))]
	\enspace .
\end{equation}
Similarly to function \fn{Distance}, the algorithmic realization of \fn{Join} is assumed to operate directly on $R_1$ and $R_2$, and not on their extensions.
In the proof of Lemma~\ref{lm:IWCH} below, we will use the following auxiliary result for $R$ computed in line~\ref{line:end_inner_while_loop}: 
\begin{lemma}
	\label{lm:join}
	For all $i \geq 1$ and $R$ computed in line~\ref{line:end_inner_while_loop} in iteration $i$ of the outer loop, 
    there exists $R' \in \repwconv<\theta_i>[E^+]$ such that $\ext(R) \subseteq \ext(R')$. 
\end{lemma}
\begin{proof}
    We prove the claim by induction on the generation order of $R$ (cf. line~\ref{line:join}).  
    Suppose $R$ has been generated in iteration $i$ for some $R_1,R_2 \in \decomp$ and $i \geq 1$.
    The base case follows from \ref{pr:join:result-is-single-block} of Proposition~\ref{pr:join}, as $R_1,R_2 \in \decomp_0$ and hence their extensions are singletons.
    Suppose the statement holds for all blocks generated before $R$. 
    Let $i_1,i_2$ be the smallest indices such that $R_1 \in \decomp_{i_1}, R_2 \in \decomp_{i_2}$. 
    Then, depending on $i_1$, either the fact that $|\ext(R_1)| = 1$ (for $i_1 = 0$) or the induction hypothesis (for $i_1 > 0$) together with Proposition~\ref{prop:monotonicity-of-hulls-wrt-parameters} imply that there exists a block $R_1' \in \repwconv<\theta_{i}>[E^+]$ such that $\ext(R_1) \subseteq \ext(R_1')$.
    Using a similar argument, there exists a block $R_2' \in \repwconv<\theta_{i}>[E^+]$ such that $\ext(R_2) \subseteq \ext(R_2')$.
    Furthermore, by the choice of $R_1,R_2$, there are $x \in \ext(R_1)$ and $y \in \ext(R_2)$ such that $D(x,y) \leq \theta_i$.
    Thus, the distance between $\ext(R_1')$ and $\ext(R_2')$ is at most $\theta_i$ and hence $R_1' = R_2'$ by Theorem~\ref{thm:weak-convex-decomposition}.
    The claim then follows from (\ref{eq:join_generators}), (\ref{eq:join}), and the monotonicity of $\wconv<\theta_i>$.
\end{proof}

Defining $\theta_0 = 0$, below we claim that $\decomp_i$ is a representation of $\wconv<\theta_i>(E^+)$ for all $i \geq 0$.
\begin{lemma} 
	\label{lm:IWCH}
    For all $i \geq 0$, $\decomp_i=\repwconv<\theta_i>[E^+]$. 
\end{lemma}
\begin{proof} 
    The statement is trivial for $i = 0$. 
	Regarding $i > 0$, we claim that the family of the extensions of the blocks in $\decomp_i$ satisfies conditions \ref{proof-enum:components-weakly-convex}--\ref{proof-enum:components-distance} of Theorem~\ref{thm:weak-convex-decomposition}. 
	Indeed, (\ref{eq:join}) and Proposition~\ref{pr:join} together imply \ref{proof-enum:components-weakly-convex} and \ref{proof-enum:components-theta-connected}, whereas \ref{proof-enum:components-distance} follows by construction (cf. line~\ref{line:next_theta}). Thus, $C = \bigcup_{R \in \decomp_i} \ext(R)$ and the family $(\ext(R))_{R\in \decomp_i}$ fulfills the conditions of Theorem~\ref{thm:weak-convex-decomposition}. Hence, $C$ is $\theta_i$-convex. 
	But then, since $E^+ \subseteq C$, $\wconv<\theta_i>(E^+) \subseteq C$ holds by the monotonicity and idempotency of $\wconv<\theta_i>$. Furthermore, we have $C \subseteq \wconv<\theta_i>(E^+)$ by Lemma~\ref{lm:join}. 
	Thus, $C = \wconv<\theta_i>(E^+)$, completing the proof.
\end{proof}

Finally, the algorithm checks whether each new block in $\decomp_i$ is consistent with the negative examples by calling function \fn{Membership} defined by
\begin{equation}
	\label{eq:membership}
	\fn{Membership}(e,R) = 
	\begin{cases}
		\symboltrue{}	& \text{if $e \in \ext(R)$}\\
		\symbolfalse{}  & \text{o/w}
	\end{cases}
\end{equation}
for all $e \in E^-$ and $R \in \decomp_i \setminus \decomp_{i-1}$.
Similarly to \fn{Distance} and \fn{Join}, the algorithmic realization of \fn{Membership} is assumed to operate directly on $R$. 

Using the above definitions and considerations, we are ready to state our main result of this section concerning the correctness and complexity of Algorithm~\ref{alg:intCHF}.
We use the following notation in the theorem: $\TS$, $\TD$, $\TJ$, and $\TM$ denote the time complexity of functions \fn{Singleton}, \fn{Distance}, \fn{Join}, and \fn{Membership}, respectively.
\begin{theorem}
\label{thm:intCHF}
Let $\M$ be a blockwise convex metric space for some $\tau \geq 0$ and $\mu$ a representation scheme for $\M$ and $\tau$.
Then Algorithm~\ref{alg:intCHF} solves Problem~\ref{problem:intCHF} for $\M$ correctly in time
\begin{equation}
\bigo(m_\oplus^2\log{m_\oplus}+m_\oplus\TS+m_\oplus^2\TD+m_\oplus\TJ+m_\oplus m_\ominus \TM)\enspace ,  
\label{eq:timecomplexityofalgintCHF}
\end{equation}
where $m_\oplus=\card{E^+}$ and $m_\ominus=\card{E^-}$.
\end{theorem}
\begin{proof}
It is easy to check that Algorithm~\ref{alg:intCHF} is correct if it returns ``{\sc NO}'' or $\decomp_0$. 
Otherwise, by \ref{lm:wellbehaved_and_blockwiseconvex:same-level-is-same} and \ref{lm:wellbehaved_and_blockwiseconvex:convex-is-last} of Lemma~\ref{lm:wellbehaved_and_blockwiseconvex}, it suffices to consider the $\theta_i$s in Lemma~\ref{lm:wellbehaved_and_blockwiseconvex} because those values already generate \textit{all} weakly convex hulls of $E^+$. 
Furthermore, \ref{lm:wellbehaved_and_blockwiseconvex:next-level-is-different} of Lemma~\ref{lm:wellbehaved_and_blockwiseconvex} guarantees that if the $\theta_i$-convex hull of $E^+$ is inconsistent with $E^-$ for some $i$, then all $\theta_j$-convex hulls of $E^+$ for $j \geq i$ are also inconsistent. 
These properties, together with Lemma~\ref{lm:IWCH}, imply the correctness of Algorithm~\ref{alg:intCHF}.

Regarding its time complexity, Algorithm~\ref{alg:intCHF} can be implemented by maintaining the sets
\begin{align*}
L_{BP} &=   \{(d,\{R_1,R_2\}): R_1,R_2\in \decomp \text{ with } 0 < \text{\fn{Distance}}(R_1,R_2) = d\} \\ 
\shortintertext{and} 
L(R) &= \{ (d,\{R_1,R_2\}) \in L_{BP} : R \in \{R_1,R_2\} \}  
\end{align*}
for all $R \in \decomp$.
$L_{BP}$ (resp. $L(R)$) is used to quickly find two blocks with distance at most a threshold (resp. the nodes of $L_{BP}$ that refer to $R$).
They can be realized by a red-black (RB) tree and by doubly linked lists, respectively.
Since insertion in RB trees (resp. in doubly linking lists) can be performed in logarithmic (resp. constant) time, the time complexity of the initialization (line~\ref{line:init}) is
\begin{equation}
	\label{eq:complexity_init_ICHF}
	\bigo(m_\oplus^2 \log{m_\oplus} + m_\oplus \TS +m_\oplus^2 \TD) \enspace .
\end{equation}
For the execution of line~\ref{line:join}, one can select an \textit{arbitrary} pair $R_1,R_2\in \decomp$ that have distance at most $\theta_i$ (e.g., the pair in the root of the RB tree) and proceed as follows:
	\begin{itemize}
		\item[($\alpha$)] 
        Delete all nodes $N$ of the RB tree that contain $R_1$ or $R_2$ in their second entry as well as all occurences of $N$ in all doubly linking lists.
		\item[($\beta$)] 
		Compute the new block $R$ by joining $R_1$ and $R_2$ and set $L(R) = \emptyset$. 
		\item[($\gamma$)] For all blocks $R' \in \decomp \setminus \{R_1,R_2,R\}$, compute the distance $d$ between $R$ and $R'$, insert $N=(d,\{R,R'\})$ into the RB tree, and add $N$ to $L(R)$ and $L(R')$.
	\end{itemize}
Suppose $|\decomp| = m$ before the execution of line~\ref{line:join}. 
The algorithm carries out $\bigo(m)$ deletions in the RB tree and $\bigo(m)$ deletions in the doubly linked lists for ($\alpha$), 
one join operation for ($\beta$), $\bigo(m)$ distance calculations and $\bigo(m)$ insertions for ($\gamma$). 
Line~\ref{line:join} is carried out at most $m_\oplus-1$ times because $|\decomp| = m_\oplus$ initially and each execution of line~\ref{line:join} decreases the cardinality of $\decomp$ by one.  
Together with (\ref{eq:complexity_init_ICHF}), this implies (\ref{eq:timecomplexityofalgintCHF}) in the claim, by noting that the algorithm spends $\bigo(m_\oplus m_\ominus \TM)$ total time for checking the consistency in line~\ref{line:start_consistency} and that the insertion and deletion operation in RB  trees (resp. doubly linking lists) can be carried out in logarithmic (resp. constant) time.
\end{proof}

\subsection{Some Illustrative Examples}
\label{sec:k-convexity}
In this section, we present some examples to illustrate the application of Algorithm~\ref{alg:intCHF} and Theorem~\ref{thm:intCHF} for three different domains.

\subsubsection*{Learning Weakly Convex Boolean Functions}

As a first application of Theorem~\ref{thm:intCHF}, we prove that the CHF problem for \emph{weakly convex} Boolean functions can be solved in {polynomial} time.
This result is not new, it was shown with a \emph{domain-specific} CHF algorithm in \citep{Ekin2000}.
Nevertheless, we present this application example because it clearly demonstrates some nice properties of our \emph{general purpose} algorithm. 
In particular, Algorithm~\ref{alg:intCHF} solves the related CHF problem  in the \textit{same} asymptotic time complexity as the domain-specific algorithm by \citet{Ekin2000} and this positive result can be obtained in a fairly \textit{simple} way by  using Algorithm~\ref{alg:intCHF} and Theorem~\ref{thm:intCHF}.


More precisely, we consider the Hamming metric space $\MHamming = (\Hamming_n,\dHamming)$ for some $n \in \N$ (cf.
Section~\ref{sec:preliminaries}).
A Boolean function $f: \Hamming_n \to \{0,1\}$ is \emph{$\theta$-convex} for some $\theta \geq 0$ if its extension
$\ext(f) = \{x \in \Hamming_n: f(x)=1\}$ is  $\theta$-convex in $\MHamming$.
In order to define a suitable representation scheme for $\theta$-convex Boolean functions, we need some further
notions.
The set $\{x_1,\neg x_1,\ldots,x_n,\neg x_n\}$ of Boolean \textit{literals} is denoted by $L_n$.
A \emph{term} $T$ is a conjunction of literals from $L_n$; $T$ is sometimes regarded as the set of literals it
contains.
A \emph{conflict} between two terms $T_i$ and $T_j$ over $L_n$ is an integer $p\in [n]$ such that $x_p \in T_i$ and
$\neg x_p \in T_j$ or vice versa.
We first claim some important properties of $\MHamming$.

\begin{proposition}
	\label{pr:BooleanProperties}
	For all $n \geq 0$, $\MHamming=(\Hamming_n,\dHamming)$ is blockwise convex for $\tau = 2$. In particular, for all $A \subseteq\Hamming_n$ that are $2$-convex and $2$-connected, $A$ is a Boolean subcube of $H_n$.
\end{proposition}
\begin{proof}
    Since $\MHamming$ is finite, it is complete. We prove for $A$ in the claim that $A = \Hamming_n[A]$, where $\Hamming_n[A]$ is the
	\textit{smallest} Boolean subcube of $\Hamming_n$ containing $A$.
	By definition, $A \subseteq \Hamming_n[A]$.
	To show $A \supseteq \Hamming_n[A]$, note that the conditions on $A$ imply that $A$ is connected (i.e., $1$-connected), from which we
	have $A \supseteq \Hamming_n[A]$ by the result that a connected Boolean function is convex if and only if it is
	$2$-convex (cf.
	Theorem 5.16 in \cite{Ekin1999}) and by the fact that a subset of $\Hamming_n$ is convex if and only if it is a subcube
	of $\Hamming_n$.
\end{proof}

We have the following result for the CHF problem for weakly convex Boolean functions: 
\begin{theorem}
\label{thm:intCHFBoolean} 
    For all $n \geq 0$, there is a representation scheme $\mu$ for
    $\MHamming=(\Hamming_n,\dHamming)$ and $\tau =2$.
	Furthermore, Algorithm~\ref{alg:intCHF} solves Problem~\ref{problem:intCHF} for $\MHamming$, $\mu$, and $\tau$ in time
    \begin{equation}
        \label{eq:intCHFBoolean:complexity} 
        \bigo(n m_{\oplus} (m_{\oplus} + m_{\ominus})) \enspace .
    \end{equation}
\end{theorem}
\begin{proof}
    Let $\theta \geq 2$.
	By  Proposition~\ref{pr:BooleanProperties} we have that $\MHamming$ is blockwise convex for $\theta$ and that the blocks of $\theta$-convex sets are formed by (Boolean) subcubes of $\Hamming_n$.
    Utilizing the fact that any non-empty subcube of $\Hamming_n$ can \textit{uniquely} be represented by a term over $L_n$, we define $\blockrepwconvHamming(\theta,A)$ for all subsets $A \subseteq \Hamming_n$ by the term representing $\wconv(A)$, if $\wconv(A)$ is a non-empty subcube of $\Hamming_n$; otherwise by $\bot$.
    One can easily check that $\blockrepwconvHamming$ satisfies
	(\ref{enum:singleton-block-representations}) and (\ref{enum:block-representations-coherent}).
    Theorem~\ref{thm:weak-convex-decomposition} then implies that $\repwconvHamming: \R_{\geq 2} \times 2^{\Hamming_n} \to 2^{\{0,1\}^*}$ defined by 
    \begin{equation}
    \label{eq:representation_Boolean_functions}
		\repwconvHamming(\theta, A) = \{\blockrepwconvHamming(\theta, B) : B \in \blocks(\wconv(A))\}
    \end{equation}
    for all $A \subseteq \Hamming_n$ is a representation scheme for $\MHamming$ and $\tau=2$.
    Note that $\repwconvHamming(\theta, A)$ in (\ref{eq:representation_Boolean_functions}) is a \textit{$k$-term-DNF} with $k = \card{\blocks(\wconv(A))}$.

    Defining $\mu$ in Problem~\ref{problem:intCHF} by $\mu_H$ in (\ref{eq:representation_Boolean_functions}), $\TS, \TD, \TJ, \TM$ in Theorem~\ref{thm:intCHF} are all in
	$\bigo(n)$  time for $\M=\MHamming$.
	This is trivial for $\TM$ and follows for $\TS$ directly from $\blockrepwconvHamming(\theta,\{x\}) =
		\bigwedge_{i=1}^n l_i$ with $l_i = x_i$, if $x_i = 1$; o/w  $l_i = \neg x_i$, for all $x =(x_1,\ldots,x_n)\in \Hamming_n$.
	Regarding $\TD$ and $\TJ$, let $T_i$ and $T_j$ be terms over $L_n$.
	Then $\dHamming(\ext(T_i), \ext(T_j))$ is equal to the number of conflicts between $T_i$ and  $T_j$, and for all $\theta\geq 2$ and terms $T_i,T_j$ with $\dHamming(\ext(T_i), \ext(T_j)) \leq\theta$, $\Call{Join}{\theta,T_i,T_j}$ is the term with literals $T_i \cap T_j$.
    We get (\ref{eq:intCHFBoolean:complexity}) from the general bound (\ref{eq:timecomplexityofalgintCHF}) in Theorem~\ref{thm:intCHF} by noting that $\log{m_\oplus} = \bigo(n)$. 
\end{proof}
A few remarks are in order.
First, unless R $=$ NP, it is NP-hard to find a consistent $k$-term-DNF, i.e., $k$ subcubes of $\Hamming_n$ such that their union is consistent with the examples,  for the \textit{smallest} $k$~\citep{Pitt1988}.  
While there is no restriction on the subcubes in this problem, weakly convex Boolean functions require a minimum distance between them. 
Although it is not guaranteed that the weakly convex hull returned by Algorithm~\ref{alg:intCHF} is optimal with respect to the number of blocks among \textit{all} consistent weakly convex hypotheses, it is an \textit{efficiently} computable alternative to the computationally \textit{intractable} smallest consistent $k$-term-DNF.
Second, the time complexity of the related domain-specific algorithm in \citep{Ekin2000} is 
slower by a factor of $\log n$.
However, that factor can be saved by applying the idea in our Algorithm~\ref{alg:intCHF} that linear search enables for an incremental calculation of the $\theta$-convex hulls for increasing $\theta$s.
This is faster than binary search computing them \emph{from scratch}.
Third, \cite{Ekin2000} also prove that for all $\theta > n/2 - 1$, the concept class $\ccWConvBoolean$ is \textit{polynomially} PAC-learnable, where $\ccWConvBoolean$ is defined as follows: For all $A \subseteq \Hamming_n$, $A \in \ccWConvBoolean$ if and only if $A$ is $\theta$-convex. 
Their proof is based on showing that the CHF problem can be solved in polynomial time for $\ccWConvBoolean$ (see, also, \ref{thm:BEHW:PAC-if-VC-and-CHF} of Theorem~\ref{thm:BEHW}).
This is trivial for $n < 6$; for $n \geq 6$, it can be shown by Theorem~\ref{thm:intCHFBoolean}, a special case of Theorem~\ref{thm:intCHF}, in a fairly \textit{simple} way.
Indeed, since $\theta > 2$, Theorem~\ref{thm:intCHFBoolean} guarantees that there exists a consistent hypothesis in $\ccWConvBoolean<\theta>$ if and only if Algorithm~\ref{alg:intCHF} with $\MHamming$, $\mu$ in (\ref{eq:representation_Boolean_functions}) and for $E^+,E^- \subseteq \Hamming_n$ returns a solution $\decomp$ of the CHF problem in polynomial time such that for all $R_1,R_2 \in \decomp$ with $R_1 \neq R_2$, $\dHamming(\ext(R_1),\ext(R_2)) > \theta$.

\subsubsection*{Learning Weakly Convex Unions of Axis-Aligned Hyperrectangles}
As a second illustrative example for the application of Algorithm~\ref{alg:intCHF} and Theorem~\ref{thm:intCHF}, we show that the CHF problem can be solved for \textit{weakly convex} unions of axis-aligned hyperrectangles in polynomial time. 
The underlying metric space for this example is $\MUnitCube = (\unitdcube,D_1)$, where $\unitdcube = [0,1]^d$ denotes the \emph{unit $d$-cube}. 
Note that $\MUnitCube$ can be regarded as a generalization of $\MHamming$ considered above.
Before stating our result in Theorem~\ref{thm:intCHFRectangles}, we first formulate some basic properties of $\MUnitCube$.

\begin{proposition}
	\label{pr:UnitdCube}
	For all $d \geq 0$ integer, $\MUnitCube$ satisfies the following properties:
	\begin{enumerate}[label=(\roman*)] 
        \item It is blockwise convex for any $\tau > 0$.
		      In particular, for all $A \in \finitesubsets[\unitdcube]$ such that $\wconv<\tau>(A)$ is $\tau$-connected,
		      $\wconv<\tau>(A)=\unitdcube{[A]}$, where $\unitdcube{[A]}$ is the smallest axis-aligned subcube that
		      contains $A$.
		      \label{pr:UnitdCube:blockwise-convex}
        \item 
            For all $A \in \finitesubsets[\unitdcube]$ and $\theta \geq 0$, $\wconv(A)$ is a finite union of axis-aligned closed hyperrectangles.
            \label{pr:unitdCube:blocks}
	\end{enumerate}
\end{proposition}
\begin{proof}
    Regarding \ref{pr:UnitdCube:blockwise-convex}, the completeness holds by the definition of $\unitdcube$.
	Let $A$ be a subset of $\unitdcube$ satisfying the conditions in \ref{pr:UnitdCube:blockwise-convex}.
	One can easily check that if $A$ is not $\tau$-connected then there exists a $\tau$-connected set
	$B\in\finitesubsets[\unitdcube]$ such that $A \subset B$ and $\wconv<\tau>(B) = \wconv<\tau>(A)$.
	Thus, it suffices to consider the case that $A$ is $\tau$-connected.
	We prove \ref{pr:UnitdCube:blockwise-convex} by induction on $\card{A}$.
	The base case $\card{A}=1$ is trivial.
	Suppose the claim holds for all $\tau$-connected sets $A' \subset \unitdcube$ with $\card{A'}\leq k$.
	Let $A =A' \cup\{a\}$ for some $A' \in \finitesubsets[\unitdcube]$ and $a \in \unitdcube$ such that $\card{A'}=k$ and $A,A'$ are both $\tau$-connected.
	The claim holds directly by the induction hypothesis if $\wconv<\tau>(A)=\wconv<\tau>(A')$.
	Suppose $a \notin \wconv<\tau>(A')$.
	Clearly, $\wconv<\tau>(A)\subseteq \unitdcube{[A]}$.
	Conversely, let $x =(x_1,\ldots,x_d) \in \unitdcube{[A]}$,  $\text{\sc Min}_i=\min_{y\in A'}y[i]$, and $\text{\sc
			Max}_i=\max_{y\in A'}y[i]$ for all $i\in[d]$.
	Let $a'=(a_1',\ldots,a_d') \in \unitdcube{[A']}$ be the point with the smallest $D_1$ distance to $a$.
	We show that $(x_1,a_2,\ldots,a_n)\in \wconv<\tau>(A)$.
	The claim is automatic if $x_1 = a_1$.
	Otherwise, $a' \in \wconv<\tau>(A')$ holds by the induction hypothesis and hence, $D_1(a,a') \leq \tau$.
	Thus, for $\tau_1 = \abs{a_1-a_1'}$ we have $\tau_1 \leq \tau$.
	We prove the claim only for the case that $a_1 < a_1'$; the proof for $a_1 \geq a_1'$ can be shown with similar
	arguments.
	If $x_1 \in \lbrack a_1, a_1'\rbrack$, then $(x_1,a_2,\ldots,a_n)\in \wconv<\tau>(\{a,a'\})\subseteq\wconv<\tau>(A)$.
 
	Otherwise (i.e., $x_1 \in (a_1', \text{\sc Max}_1]$), let $p_i$ (resp.
	$p'_j$) denote $(a_1+i\tau_1,a_2,\ldots,a_d)\in \unitdcube{[A]}$ (resp.
	$(a_1'+j\tau_1,a_2',\ldots,a_d')\in\unitdcube{[A']}$) for all $i=0,\ldots,\ell$ (resp. $j=0,\ldots,\ell-1$), where
	$\ell = \lfloor (x_1-a_1)/\tau_1\rfloor$.
		By the induction hypothesis, $p_j'\in \wconv<\tau>(A')\subseteq \wconv<\tau>(A)$ for all $j$ and hence, $p_i \in
	\wconv<\tau>(\{p_{i-1},p_{i-1}'\})\subseteq\wconv<\tau>(A)$ holds for all $i\in [\ell]$.
		Therefore, $$(x_1,a_2,\ldots,a_n)\in  \wconv<\tau>(\{p_\ell,(\min\{a_1'+\ell\tau_1,\text{\sc
				Max}_1\},a_2',\ldots,a_d')\})\in \wconv<\tau>(A)\enspace .
		$$
		Using the same arguments, we have 
		$$
			(x_1,\ldots,x_{i},a_{i+1},\ldots,a_d) \in \wconv<\tau>(A'\cup\{(x_1,\ldots,x_{i-1},a_{i},\ldots,a_d)\}) \subseteq
			\wconv<\tau>(A)
		$$
		for $i=2,\ldots,d$, completing the proof of \ref{pr:UnitdCube:blockwise-convex}.

        Regarding \ref{pr:unitdCube:blocks}, $\wconv(A) = A$ for $\theta = 0$ and each block consists of a single point. It is closed as $A$ is finite. 		For $\theta > 0$ the claim follows from Theorem~\ref{thm:weak-convex-decomposition} and \ref{pr:UnitdCube:blockwise-convex}.
\end{proof}
We are ready to state the following result for Problem~\ref{problem:intCHF} for $\M =\MUnitCube$: 
\begin{theorem}
	\label{thm:intCHFRectangles} 
    For all $d \in \N$ and $\tau > 0$, there is a representation scheme
	$\repwconvUnitCube$ for $\MUnitCube$ and $\tau$.
	Furthermore, Algorithm~\ref{alg:intCHF} solves Problem~\ref{problem:intCHF} for $\MUnitCube$, $\repwconvUnitCube$, and
	$\tau$ in time
    \begin{equation}
    \label{eq:complexity_hyperrectangles}
    \bigo(m_{\oplus}^2 \log{m_{\oplus}} + m_{\oplus}^2 d + m_{\oplus} m_{\ominus} d) \enspace. 
    \end{equation}
\end{theorem}
\begin{proof}
    Let $\theta > 0$.
	By Proposition~\ref{pr:UnitdCube}, $\MUnitCube$ is blockwise convex for $\theta$.
	Furthermore, by \ref{pr:unitdCube:blocks} of Proposition~\ref{pr:UnitdCube}, for all	$d \geq 0$ and $A \in \finitesubsets[\unitdcube]$, $\wconv(A)$ is the union of $k$ axis-aligned hyperrectangles of $\unitdcube$, where $k = \card{\blocks(\wconv(A))}\leq \card{A}$. 
    Utilizing the fact that an axis-aligned hyperrectangle can be represented by its minimum and maximum vertices, we define $\blockrepwconvUnitCube(\theta,A)$ for all $A \subseteq \unitdcube$ by $(A_{\min},A_{\max})$ if $A = \unitdcube{[A]}$; otherwise by $\bot$, where $A_{\min}$ (resp. $A_{\max}$) denotes the componentwise minimum (resp. maximum) of the points in $A$.\footnote{We assume that real numbers are represented in $\bigo(1)$ space up to a
		certain precision.}
    Clearly, $\blockrepwconvUnitCube$ satisfies (\ref{enum:singleton-block-representations}) and (\ref{enum:block-representations-coherent}).
	Define $\repwconvUnitCube:\R_{> 0} \times \finitesubsets[\unitdcube] \to 2^{\{0,1\}^*}$ by $$\repwconvUnitCube(\theta,A) =
		\{\blockrepwconvUnitCube(B) : B \in \blocks(\wconv(A))\}$$ 
    for all  $d \geq 0$ and $A \in \finitesubsets[\unitdcube]$.
	It holds that $\repwconvUnitCube$ is a representation scheme for $\MUnitCube$ and $\tau$.
	Defining $\mu$ in Problem~\ref{problem:intCHF} by $\repwconvUnitCube$,  (\ref{eq:complexity_hyperrectangles}) then follows by
	Theorem~\ref{thm:intCHF} by noting that $\TS, \TD, \TJ, \TM$ in Theorem~\ref{thm:intCHF} are all in $\bigo(d)$ time for
	$\M=\MUnitCube$.
\end{proof}

A few comments on this result are in order.
First, similar to Theorem~\ref{thm:intCHFBoolean}, we obtained Theorem~\ref{thm:intCHFRectangles} in a fairly \textit{simple} way by using our general result in Theorem~\ref{thm:intCHF}.
Second, the number of hyperrectangles returned by Algorithm~\ref{alg:intCHF} in polynomial time is \textit{optimal} with respect to the set of weakly convex hulls of the positive examples. 
However, the hyperrectangles must be pairwise \textit{non-overlapping}.
In contrast, it is NP-hard to find the smallest number of possibly \textit{overlapping} axis-aligned rectangles whose union is consistent with the examples, even for $d=2$~\citep{Bereg2012}. 
Third, as we show below in Theorem~\ref{thm:PACrecangles} by using Theorem~\ref{thm:intCHFRectangles}, the concept class formed by the $\theta$-convex union of axis-aligned hyperrectangles is polynomially PAC-learnable.
More precisely, we prove that for all $d > 0$, the concept class $\ccWConvRectangles = \{ \wconv(A) : A \in \finitesubsets[\unitdcube] \}$ is polynomially PAC-learnable for sufficiently large $\theta$.
We prove the claim for $d > 0$ by noting that it is straightforward for $d = 0$.

\begin{theorem} 
\label{thm:PACrecangles} 
    For any constant $c\geq 0$, $\ccWConvRectangles$ is polynomially PAC-learnable for all $d \in \N$ and $\theta \geq \frac{2d}{\euler} \sqrt[d]{\frac{\euler}{d^{c-1}}}$.
\end{theorem}
\begin{proof}
	If $\theta \geq d$, then Proposition~\ref{pr:UnitdCube} implies that all concepts in $\ccWConvRectangles$ consist of a single axis-aligned hyperrectangle of $\unitdcube$; this concept class is known to be efficiently PAC-learnable. 
	For the case that $\frac{2d}{e} \sqrt[d]{\frac{e}{d^{c-1}}} \leq \theta < d$, by \ref{thm:BEHW:PAC-if-VC-and-CHF} of Theorem~\ref{thm:BEHW} it is sufficient to show that the CHF problem for $\ccWConvRectangles$ can be solved in polynomial time and that the VC-dimension of $\ccWConvRectangles$ is bounded by a polynomial of $d$. 
    Setting $\tau = \frac{2d}{e} \sqrt[d]{\frac{e}{d^{c-1}}}$, we have $\theta > \tau = 0$ by $d > 0$.
    Proposition~\ref{pr:UnitdCube} and Theorem~\ref{thm:intCHFRectangles} imply that there exists a consistent hypothesis in $\ccWConvRectangles$ if and only if Algorithm~\ref{alg:intCHF} with $\MUnitCube$, $\tau$, and $\mu$ defined above and for input $E^+,E^- \in \finitesubsets[\unitdcube]$ returns a solution $\decomp$ of the CHF problem such that for all $R_1,R_2 \in \decomp$ with $R_1 \neq R_2$, $D_1(R_1,R_2)  > \theta$. 
    Furthermore, Algorithm~\ref{alg:intCHF} runs in polynomial time.
    The proof then follows by Lemma~\ref{lm:VC-dimension_hyperrectangles} (see the Appendix), which states that
    \begin{equation*}
        \vcdim(\ccWConvRectangles)= \bigo(d^{c+1} \log d) \enspace .
    \end{equation*}
\end{proof}

\subsubsection*{Learning Weakly Convex Unions of Polygons}

In the previous example, the Manhattan distance $D_1$ induced axis-aligned hyperrectangles over $\Rd$.
Our third example is concerned with the metric space $\MPlane = (\R^2, \dist_2)$.
For this case, the Euclidean distance $D_2$ induces weakly convex unions of convex polygons.
Using Algorithm~\ref{alg:intCHF} and Theorem~\ref{thm:intCHF}, we show that the CHF problem can also be solved efficiently for this class of weakly convex hypotheses. 

To present this result, we first recall some necessary notions.
A point $p$ of a convex set $C \subseteq \R^2$ is \textit{extreme} if there are no  $x,y \in C$ such that $x, y , p$ are pairwise different and $p \in \ti(x, y)$. 
Let $\extreme(C)$ denote the set of all extreme points of $C$. 
It is a well-known fact that if $A \in \finitesubsets[\R^2]$, then $\conv(A)$ is a convex polygon and $\extreme(\conv(A)) \subseteq A$ \citep[see, e.g.,][]{Krein1940}. 
A subset $A \subseteq \R^2$ is called \textit{path-connected} if for all $x, y \in A$ there is a continuous function $f: [0, 1] \to A$ with $f(0) = a$ and $f(1) = b$. 
Moreover, $A$ is called \textit{locally convex} if for every $x \in A$ there exists $\delta_x > 0$ such that $\oball<\delta_x>(x) \cap A$ is convex, where
$\oball(x) = \{y \in X: \dist(x, y) < r\}$. 

\begin{proposition}
    \label{prop:polygons} 
    $\MPlane$ satisfies the following properties: 
    \begin{enumerate}[label=(\roman*)]
        \item \label{prop:polygons:blockwise} For all $\tau > 0$, $\MPlane$ is blockwise convex for $\tau$. In particular, if $\wconv<\tau>(A)$ is $\tau$-connected for some $A \in \finitesubsets[\R^2]$ then $\wconv<\tau>(A) = \conv(A)$ is a convex polygon with $\extreme(\conv(A)) \subseteq A$. 
        \item \label{prop:polygons:wconv-decomposition} For all $A \in \finitesubsets[\R^2]$ and $\theta \geq 0$, $\wconv(A)$ is the union of a finite set of convex polygons. 
    \end{enumerate}
\end{proposition}
\begin{proof}
    Regarding \ref{prop:polygons:blockwise}, clearly, $\MPlane$ is complete. Let $\tau > 0$ and $A \in \finitesubsets[\R^2]$ such that $\wconv<\tau>(A)$ is $\tau$-connected.
    We first show that $\wconv<\tau>(A)$ is (a) closed, (b) path-connected, and (c) locally convex. 
    Property~(a) is immediate from the definition of  weakly convex sets. 
    Since $\wconv<\tau>(A)$ is $\tau$-connected and $\tau$-convex, 
    for any two of its points there is a polygonal chain and hence, a continuous path in $\wconv<\tau>(A)$ connecting them, implying (b).
    To show (c), let $x \in \wconv<\tau>(A)$, $\delta_x = \tau/2$, and $y, z \in \oball<\delta_x>(x) \cap \wconv<\tau>(A)$.
    Then  $\euclideandist(y, z) \leq \tau$ and hence, $\ti(y, z) \subseteq \wconv<\tau>(A) \cap \oball<\delta_x>(x)$ because $\wconv<\tau>(A)$ is $\tau$-closed and $\oball<\delta_x>(x)$ is convex, implying (c).
    Applying the result shown independently by \citet{Tietze1928} and \citet{Nakajima1928} to $\MPlane$, 
    any closed, path-connected\footnote{
    In its general form, the Tietze-Nakajima theorem is stated for subsets of $\M_d = (\R^d, \euclideandist)$ that are closed, connected, and locally convex.
    A subset $A \subseteq \R^d$ is \textit{connected} if there are no open sets $A_1, A_2 \subseteq \R^d$ such that $A \cap A_1,A \cap A_2 \neq \emptyset$, $A_1 \cap A_2 \cap A = \emptyset$, and $A \subseteq A_1 \cup A_2$, where a set $B \subseteq \R^d$ is \textit{open} if for all $x \in B$ there is $\epsilon > 0$ such that $\oball<\epsilon>(x) = \{y \in \R^d : \euclideandist(x, y) < \epsilon\} \subseteq B$. It is a well-known fact that path-connectedness implies connectedness even in general topological spaces.}, and locally convex set in $\MPlane$ is convex. 
    Thus $\wconv<\tau>(A)$ is convex 
    and hence $\conv(A) \subseteq \wconv<\tau>(A)$, which, together with \ref{prop:monotonicity:monotone-wrt-theta} of Proposition~\ref{prop:monotonicity-of-hulls-wrt-parameters}, implies $\conv(A) = \wconv<\tau>(A)$.
    The proof of \ref{prop:polygons:blockwise} is then completed by noting that the convex hull of any finite point set $A$ in $\MPlane$ forms a convex polygon whose extreme points lie in $A$~\citep[see, e.g., ][]{Krein1940}.  
    Finally, the proof of \ref{prop:polygons:wconv-decomposition} is immediate by \ref{prop:polygons:blockwise} and Theorem~\ref{thm:weak-convex-decomposition}. 
\end{proof}

We are ready to state the following result for Problem~\ref{problem:intCHF} for the  case of $\M = \MPlane$. 

\begin{theorem}
    \label{thm:intCHF:polygons} 
    For all $\tau > 0$, there is a representation scheme $\repwconvPlane$ for $\MPlane = (\R^2, \euclideandist)$ and $\tau$. Furthermore, Algorithm~\ref{alg:intCHF} solves Problem~\ref{problem:intCHF} for $\MPlane$, $\repwconvPlane$, and $\tau$ in time 
    \begin{equation}
    \label{eq:time_complexity_polygons}
        \bigo( m_\oplus^2 \log{m_\oplus} + m_\oplus m_\ominus \log{m_\oplus} ) \enspace \text{.}
    \end{equation}
\end{theorem}
\begin{proof}
    By Proposition~\ref{prop:polygons}, $\MPlane$ is complete and blockwise convex for any $\tau > 0$.
    Furthermore, \ref{prop:polygons:wconv-decomposition} of Proposition~\ref{prop:polygons} implies that for all $A \in \finitesubsets[\R^2]$, $\wconv(A)$ is the union of $k$ convex polygons where $k = \card{\blocks(\wconv(A))} \leq \card{A}$.
    Define $\repwconvPlane: \R_{>0} \times \finitesubsets[\R^2] \to \{0, 1\}^*$ by 
    $$\repwconvPlane(\theta, A) = \{\ccextreme(B) : B \in \blocks(\wconv(A))\}$$ 
    for all $A \in \finitesubsets[\R^2]$, where $\ccextreme(B)$ is the sequence of the extreme points of the convex polygon $B$ in counterclockwise order, starting with some canonical (e.g., the lexicographically smallest) extreme point. 
    One can easily check that $\repwconvPlane$ is a representation scheme for $\MPlane$ and $\tau$ with $\blockrepwconvPlane(\theta, A)$ defined by $\ccextreme(\wconv(A))$ for all $A \in \finitesubsets[\R^2]$ if $\wconv(A)$ is a convex polygon in $\MPlane$; otherwise by $\bot$. 
    Defining $\mu$ in Problem~\ref{problem:intCHF} by $\repwconvPlane$, for $\TS$, $\TD$, $\TJ$, $\TM$ in Theorem~\ref{thm:intCHF} we have that $\TS$ can be carried out in $\bigo(1)$ time by noting $\blockrepwconvPlane(\theta,\{x\}) = (x)$, %
    $\TD$ in $\bigo(\log{m_{\oplus}})$ \citep{Edelsbrunner1985},  %
    $\TJ$ in $\bigo(m_\oplus \log{m_\oplus})$ using, e.g., Graham's scan \citep{Graham1972} for computing the convex hull of the extreme points of two blocks, and $\TM$ in $\bigo(\log{m_\oplus})$ time by partitioning the plane into $m_\oplus$ wedges~\citep[see Chapter~2 of][]{Preparata/Shamos/85}. 
    We obtain (\ref{eq:time_complexity_polygons}) by substituting these time complexities into (\ref{eq:timecomplexityofalgintCHF}). 
\end{proof}

A few remarks on the algorithmic details are in order. 
All algorithms discussed in the proof of Theorem~\ref{thm:intCHF:polygons} work with a sequence of extreme points in counterclockwise order as a representation for the involved convex polygons. 
The algorithm of \citet{Edelsbrunner1985} for computing the minimum distance of two polygons with $m_1$ and $m_2$ extreme points, respectively, has an asymptotic runtime of $\bigo(\log{m_1} + \log{m_2})$. 
It assumes, however, that the two convex polygons do not intersect. 
In our setting, it is easy to find examples where this does not hold during the execution of Algorithm~\ref{alg:intCHF}. 
This is not a problem because, as \citet{Edelsbrunner1985} also mentions, there are algorithms detecting the intersection of two convex polygons having the same time complexity \citep[see, e.g., ][]{Dobkin1983}. 
Last but not least, the algorithm in \citep{Preparata/Shamos/85} for deciding the membership problem in convex $m$-gons requires a $\bigo(m)$ time preprocessing step. It can be carried out directly after the join operation. 

Finally, since the VC-dimension of convex polygons is unbounded, we cannot apply Theorem~\ref{thm:BEHW} to prove polynomial PAC-learnability for the concept class formed by weakly convex unions of polygons.

\section{The Extensional Learning Setting}
\label{sec:extensional-setting}
In this section, we present Algorithm~\ref{alg:extCHF}, an adaptation of Algorithm~\ref{alg:intCHF} to the case of learning \textit{extensional} weakly convex hypotheses, i.e., which are given by enumerating their elements.
Accordingly, the domains are restricted to \emph{finite} metric spaces. 
This learning setting naturally arise when weakly convex sets have no concise representation, e.g., in case of performing vertex classification in graphs. 
Similarly to Problem~\ref{problem:intCHF} in Section~\ref{sec:intensional-setting}, we consider the case that $\theta$ is \textit{not} given in advance and return the \textit{largest} weakly convex hull of the positive examples that is consistent with the negative examples. 
Out of the consistent weakly convex hulls, it is the closest approximation of the convex hull of the positive examples.
More precisely, we consider the following CHF problem:
\begin{problem}
	\label{problem:extCHF}
	\emph{Given} a metric space $\M=(X,\dist)$ with $\card{X}=n$ for some positive integer $n$ and disjoint sets $E^+,E^- \subseteq X$ of positive and negative examples, \emph{return} 
	$$
	\max_{\theta \geq 0} \{\wconv(E^+): \wconv(E^+) \cap E^- = \emptyset \} \enspace .
	$$
\end{problem}
\begin{algorithm}[t]
	\begin{algorithmic}[1]
		\Require finite metric space $\M= (X,\dist)$
		\Input disjoint sets $E^+,E^- \subseteq X$ with $E^+ \neq \emptyset$	
		\Output $\wconv(E^+)$ such that $\wconv(E^+)\cap E^- = \emptyset$ and $\wconv<\theta'>(E^+)\cap E^- \neq \emptyset$ for all $\theta' > \theta$ satisfying $\wconv<\theta'>(E^+) \supsetneq \wconv(E^+)$
		\Statex 
        \State compute and sort all pairwise distances in $X$
        \label{line:ext:pairwise_distances}
		\State $C_0 \gets E^+$ 
		\For{$i = 1, \ldots,k$}
		\State $C \gets C_{i-1}$ 
        \label{line:ext_init_C}
		\While{$\exists x,y \in C$ with  $0 < \dist(x,y) \leq \theta_{i}$}
        \label{line:ext_start_while}
        \If{$\ti(x,y) \cap E^- = \emptyset$} $C \gets C \cup \ti(x,y)$ \hfill // cf. (\ref{def:weak-convexity}) for the def. of $\ti(x,y)$
        \label{line:ext_add_interval}
		\Else \	\Return $C_{i-1}$
		\EndIf		
        \EndWhile
        \State $C_i \gets C$
        \label{line:ext_C_i}
		\EndFor
		\State \Return $C_i$ 
	\end{algorithmic}
	\caption{\sc Extensional Consistent Hypothesis Finding} 
	\label{alg:extCHF}
\end{algorithm}
Note that $E^+ \cap E^- = \emptyset$ and $\conv_{0}(E^+) = E^+$ together imply that Problem~\ref{problem:extCHF} always has a solution. 
Let $\theta_1 < \ldots < \theta_k$ be the pairwise distances in $X$ computed in line~\ref{line:ext:pairwise_distances} of Algorithm~\ref{alg:extCHF}  and define $\theta_0$ by $0$.
The solution of Problem~\ref{problem:extCHF} can be obtained for some $\theta \in \{\theta_0,\theta_1,\ldots,\theta_k\}$ because for all $A \subseteq X$, $\conv_{\theta_i}(A) = \conv_{\theta}(A)$ for all $\theta \in [\theta_i,\theta_{i+1})$, for every $i =0,\ldots,k-1$, and $\conv_{\theta'}(A)=\conv(A)$ for all $\theta' \geq \theta_k$. 
Algorithm~\ref{alg:extCHF} utilizes this fact and the monotonicity stated in Proposition~\ref{prop:monotonicity-of-hulls-wrt-parameters}.
In particular, in iteration $i$ of the outer loop, it calculates $C_i$ from $C_{i-1}$ by setting $C$ to $C_{i-1}$ (line~\ref{line:ext_init_C}) and adding the interval of $x,y$ to $C$ for all $x,y\in C$ with distance at most $\theta_i$ (cf. lines~\ref{line:ext_start_while}--\ref{line:ext_add_interval}). 
It returns $C_i$ for the largest $i$ that is consistent with $E^-$.
One can easily check that $C_i = \wconv<\theta_i>(E^+)$  for all $i \geq 0$, implying the correctness of Algorithm~\ref{alg:extCHF}.

Regarding the time complexity of Algorithm~\ref{alg:extCHF}, one can maintain the set of pairs $x,y$ considered in line~\ref{line:ext_start_while} in an RB tree with their distances as keys. 
Utilizing the fact that each pair of points in $X$ is considered at most once, we need $\bigo(\log{n})$ time for the insertion and for the deletion of a pair in the RB tree, where $n = |X|$. 
Furthermore, the interval $\ti(x,y)$ can be calculated in $\bigo(n)$ time for all $x,y \in X$ from the pairwise distances computed in line~\ref{line:ext:pairwise_distances}.
Thus, since the total number of pairs $x,y$ considered in line~\ref{line:ext_start_while} is $\bigo(n^2)$, the total time of the outer loop is $\bigo(n^3)$. 
Denoting the time complexity of computing all pairwise distances in line~\ref{line:ext:pairwise_distances} by $T_P(\M)$, we have the following result: \begin{theorem}
	\label{thm:ECHF}
	Algorithm~\ref{alg:extCHF} solves Problem~\ref{problem:extCHF} correctly in
	$\bigo(T_P(\M)+n^3)$  time and $\bigo(n^2)$ space.
\end{theorem}

\subsection{Learning Weakly Convex Sets in Graphs}
\label{subsec:vertex-classification}  

In this section, we illustrate how Algorithm~\ref{alg:extCHF} works on vertex classification in graphs.
More precisely, we experimentally demonstrate on different synthetic graph datasets that already for a relatively small set of training examples, Algorithm~\ref{alg:extCHF} is able to return a hypothesis that closely approximates the unknown target concept.
For an undirected graph $G$, the underlying metric space is defined by $\M= (V(G), \dist_g)$, where $\dist_g$ is the \emph{geodesic} (or shortest-path) distance. 
For simplicity, $G$ is assumed to be connected. 
%


\begin{table}[t]
	\centering
	\begin{tabular}{lr@{\hspace{0.25cm}}r@{\hspace{0.25cm}}rr@{\hspace{0.25cm}}rr}
		\toprule
		& Number of &  Number of & \multicolumn{2}{c}{Diameter} & \multicolumn{2}{c}{Density} \\
		Type & \multicolumn{1}{c}{Vertices} & \multicolumn{1}{c}{Graphs} & \multicolumn{2}{c}{Mean $\pm$ SD}  & \multicolumn{2}{c}{Mean $\pm$ SD}  \\
		\midrule
		\multirow[t]{3}{*}{Grid} & 2500 & 845 & $100.77  \, \pm \hspace*{-1em}$ & $22.88$ & $1.037 \cdot 10^{-3} \, \pm \hspace*{-1em}$ & $16.404 \cdot 10^{-5}$ \\
		& 5625 & 594 & $137.40  \, \pm \hspace*{-1em}$ & $31.68$ & $0.495 \cdot 10^{-3}  \, \pm \hspace*{-1em}$ & $8.784 \cdot 10^{-5}$ \\
		& 10000 & 400 & $173.63  \, \pm \hspace*{-1em}$ & $41.01$ & $0.308 \cdot 10^{-3}  \, \pm \hspace*{-1em}$ & $5.722 \cdot 10^{-5}$ \\
		\multirow[t]{3}{*}{Delaunay} & 2500 & 130 & $56.10  \, \pm \hspace*{-1em}$ & $1.16$ & $2.274 \cdot 10^{-3}  \, \pm \hspace*{-1em}$ & $0.108 \cdot 10^{-5}$ \\
		& 5625 & 180 & $84.04  \, \pm \hspace*{-1em}$ & $1.32$ & $1.012 \cdot 10^{-3}  \, \pm \hspace*{-1em}$ & $0.023 \cdot 10^{-5}$ \\
		& 10000 & 178 & $112.45  \, \pm \hspace*{-1em}$ & $1.52$ & $0.570 \cdot 10^{-3}  \, \pm \hspace*{-1em}$ & $0.009 \cdot 10^{-5}$ \\
		\bottomrule
	\end{tabular}
	\caption{Details of the balanced classification graph datasets. 
 For each combination of graph type and size, the table shows the corresponding number of graphs and the mean and standard deviation of their diameters and edge densities. 
	}
	\label{tab:classification-dataset}
\end{table}

\subsubsection*{Datasets}
For the experiments we generated two types of \textit{synthetic} graph datasets: {complete} and {incomplete grids}, which are classical examples, e.g., in percolation theory \citep[see, e.g., ][]{Kesten1982}, as well as less regular graphs based on \emph{Delaunay} triangulations~\citep{Delaunay1934}. 
For each graph $G$, the target concept was defined by $\wconv(A)$ for some $A \subseteq V(G)$ selected at random and $\theta$ defined below.

All incomplete grids were generated from some complete {non-periodic} or {periodic} two-dimensional grid of size $\ell \times \ell$ by removing  $t\%$ of the edges, where a periodic grid is obtained from a non-periodic one by connecting its corresponding boundary vertices horizontally and vertically.
The edges for removal were selected uniformly at random, subject to the constraint that the resulting graph remained connected.
For all $\ell \in \{50, 75, 100\}$, $t \in \{0, 20, 40\}$, $\card{A} \in \{10, 20, 30, 40\}$, and $\theta \in \{10, 15, 20, 25, 30\}$, we generated $25$ periodic and $25$ non-periodic 
grids 
with target concept $V^+ = \wconv(A)$ and $V^- = V(G) \setminus V^+$.
Note that $t = 0$ corresponds to complete grids.
Our explorations clearly showed that extremely good classification results were obtained for strongly imbalanced graphs, either because $V^+$ formed an (almost) convex set (when $|V^+| \gg |V^-|$) or because most blocks of $V^+$ were singletons (when $|V^+| \ll |V^-|$).
For our learning experiments we therefore used only those graphs 
that satisfied the constraint $\card{V^+} / \card{V} \in [0.25, 0.75]$ (see Table~\ref{tab:classification-dataset} for more details). 

The synthetic graphs in the second type were generated by \emph{Delaunay} triangulations~\citep{Delaunay1934}.
For each graph $G$ we selected a finite set of points from the unit square $[0,1]^2$ uniformly at random for $V(G)$ and connected two points $u, v \in V(G)$ by an undirected edge if and only if they co-occur in a triangle of the Delaunay triangulation. 
To increase the diameter of the generated graphs, we only kept the 95\textsuperscript{th} percentile of edges with respect to the Euclidean length. 
To ensure connectivity, we deleted all isolated vertices after the edge removal.
In this way, we generated 25 Delaunay graphs, each with a target concept, for all combinations of $\card{V(G)} \in \{2500, 5625, 10000\}$, $\card{A} \in \{10, 20, 30, 40\}$, $\theta \in \{10, 15, 20, 25, 30\}$, and set $V^+ = \wconv(A)$ and $V^- = V(G) \setminus V^+$. 
For similar reasons discussed above for grids, we used only graphs satisfying $\card{V^+} / \card{V} \in [0.25, 0.75]$ for our experiments (see Table~\ref{tab:classification-dataset}). 

In the learning problem over a particular graph, the {target concept $V^+$} as well as $\theta$ are both {unknown} to the learning algorithm. 
About 64.4\% of the selected target concepts consist of a single block. 
However, this does not necessarily imply that these target concepts are convex (see, e.g., the example in Remark~\ref{rem:circle-not-blockwise-convex}). 
For each graph in the experiments, we constructed 10 learning tasks by sampling $\card{E^+} = \card{E^-} \in \{10, 20, 30, \dots, 100\}$ positive and negative examples independently and uniformly at random from $V^+$ and $V^-$, respectively.
We note that the hypotheses computed by Algorithm~\ref{alg:extCHF} can have \textit{two-sided} error. 
Besides false negatives, we can have also false positives. For example, the algorithm can overestimate $\theta$. 
Our early explorations of the experiments have, however, shown that this happened rarely.
Thus, we expect a precision 
of 1.0 in most of the cases. 
Accordingly, the {recall} 
is the interesting performance indicator. 
Nonetheless, we also measure the {accuracy} 
in order to compare our learner to the na\"{i}ve majority {baseline} classifier defined by $\max\{\card{V^+}/\card{V}, \card{V^-}/\card{V}\}$. 

\subsubsection*{Results}

\begin{figure}[t]
	\centering 
	\includegraphics{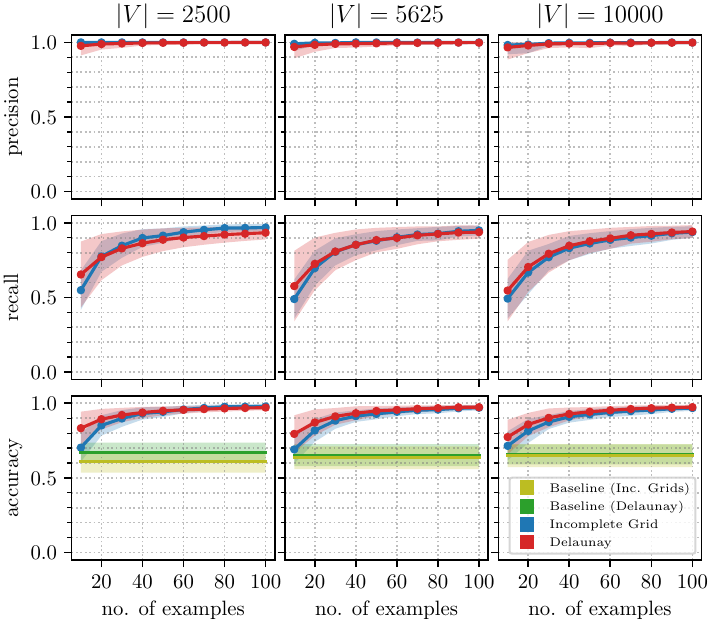}
	\caption{Precision, recall, and accuracy ($y$-axes) for various number of training examples ($x$-axes) for the balanced graphs with different graph sizes ($\card{V}$).}
	\label{fig:classification-results}
\end{figure}

The results are depicted in Figure~\ref{fig:classification-results}.
They are grouped vertically by the graphs' size (i.e., $\card{V}$). 
We plot the \emph{mean} precision, recall, and accuracy results ($y$-axes) obtained for different number of training examples ($x$-axes) with Algorithm~\ref{alg:extCHF} for grids (blue plots) and Delaunay graphs (red plots).\footnote{
	We note that in the case of Delaunay graphs, the experiments were carried out with weighted graphs as well, where the weight of an edge was defined by the Euclidean distance of its points. 
	The predictive performance in the weighted case was {slightly}, but not substantially worse.
	The overall picture was the same as presented in Figure~\ref{fig:classification-results}.}
In addition, we provide the mean baseline accuracy (green plots).
For all plots, the shaded area indicates one standard deviation from the mean value of the respective performance measure. 
For $\card{E^+} = \card{E^-} \geq 20$, our learner outperforms the baseline significantly. 
It is remarkable that the learner does \emph{not} require much more examples with increasing graph sizes to achieve the same performance. 
For example, on average $40$ training examples are sufficient to achieve an accuracy of at least 0.9, regardless of the graph type. 
One can also observe that for all graph types and graph sizes, the baseline is between 0.6 and 0.7 on average with a standard deviation of less than 0.078. 
This is due to the fact that it is defined by majority.

\begin{figure}[t]
	\centering
	\includegraphics{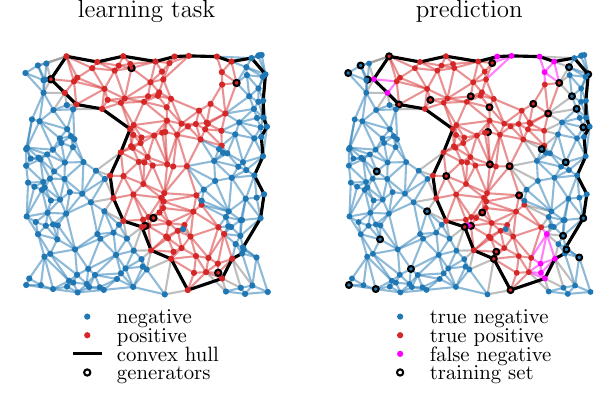}
	\caption{An exemplary Delaunay graph with $250$ vertices. 
 On the left, the unknown target concept (depicted in \textcolor{tabred}{\textbf{red}}). 
 It is the $\theta$-convex hull of the six generator vertices marked with black border for $\theta = 8$. 
 The target concept is \emph{not} convex; the convex hull of the generators contain the vertices enclosed by the black line.
 Notice that there is a negative point enclosed by three positive points in the lower part of the target concept. 
 On the right, the figure shows the prediction of the hypothesis returned by our generic algorithm for the $40$ training examples marked with a black border. 
 The image depicts \textcolor{tabred}{\textbf{true positives}}, \textcolor{tabblue}{\textbf{true negatives}}, and \textcolor{fuchsia}{\textbf{false negatives}}. 
 In this case, there were no false positives. 
 The convex hull of the positive examples contain the vertices enclosed by the black line. In this example it is the same as the convex hull on the left.
 }
	\label{fig:delaunay-task} 
\end{figure}


In Figure~\ref{fig:delaunay-task} we give an illustrative example of a learning task for a Delaunay-based graph with $\card{V} = 250$, together with the node prediction using $40$ trainings examples ($20$ positive and $20$ negative examples). 
The training examples are marked with black outline and the predictions are encoded by colors. 
In particular, dark red corresponds to  true positive, dark blue to true negative, and pink to false negative nodes. 
In this particular example we have no false positive node, which was the case for most graphs. 
Figure~\ref{subfig:illustrative-large-graph:medium-theta} in Section~\ref{sec:weakly-convex-properties} depicts one of the actual Delaunay target concepts that were used in our experiments for $10,000$ nodes. 
It consists of 3,518 nodes in 5 blocks. Notice the singleton block on the far bottom right.

In summary, our experimental results clearly show that using our \textit{generic} Algorithm~\ref{alg:extCHF}, a remarkable predictive accuracy can be obtained already with relatively small training sets, even though our approach does \textit{not} utilize any domain-specific knowledge. 
We emphasize that the focus of this paper is on investigating different aspects of the CHF problem for hypotheses over arbitrary, and not for some specific metric spaces.
The design and a systematic empirical evaluation of a domain-specific algorithm from our adaptation that, in addition, utilizes some structural properties of the underlying graph goes beyond the scope of this paper (cf. Section~\ref{sec:conclusion} for a discussion).


\section{Concluding Remarks}
\label{sec:conclusion} 
The illustrative examples in Sections~\ref{sec:k-convexity} and \ref{subsec:vertex-classification} clearly demonstrate the usefulness and relevance of weakly convex sets for \textit{machine learning}. 
While our focus in this work was solely on applications to \emph{machine learning}, weakly convex sets seem to be useful for \emph{data mining} applications (e.g., itemset mining, subgroup discovery) as well. 
Another potential application area could be \textit{conceptual spaces} spanned by so-called \textit{quality dimensions}, a general framework introduced by \cite{Gaerdenfors2000,Gaerdenfors2014} for \textit{geometric} representation of concepts. 
G\"ardenfors' underlying thesis for his theory is that \textit{natural} concepts are \textit{convex} regions of conceptual spaces. 
It is an interesting question whether \textit{weak convexity} can be used effectively to \textit{decompose} concepts into semantically \textit{meaningful} ``subconcepts''. 
	
The inner loop of Algorithm~\ref{alg:intCHF} iteratively joining the blocks is very similar to \emph{single linkage clustering}, raising the following question: Can the time complexity stated in Theorem~\ref{thm:intCHF} be further improved by using techniques \citep[e.g.,][]{Sibson1973} that accelerate single linkage clustering algorithms?
	
The goal in Problems~\ref{problem:intCHF} and \ref{problem:extCHF} is to return a $\theta$-convex hull of the positive examples for the \textit{largest} $\theta$ that does not contain any of the negative examples.
This $\theta$-convex hull is, however, \textit{not} necessarily optimal with respect to the number of blocks.\footnote{
    In contrast to this long version, where the consistent hypothesis finding problems are to return a consistent weakly convex \textit{hull} of the positive examples with the \textit{smallest} number of blocks, in the short version of this paper it was \textit{mistakenly} defined to return a weakly convex \textit{set} with the \textit{smallest} number of blocks that contains all positive and none of the negative examples, and stated \textit{erroneously} that this latter problem can be solved in polynomial time for weakly convex  Boolean functions and axis-aligned hyperrectangles \citep[][Lemmas~20 and 22]{Stadtlaender2021}. 
}
The number of blocks in a $\theta$-convex set is bounded by the cardinality of the largest set $S$ satisfying $\dist(x, y) > \theta$ for all $x, y \in S$. 
For graphs, this cardinality is precisely the \emph{$\theta$-independence number}, which is NP-hard to compute \citep{Garey1979}. 
A related result of \citet{Bereg2012} states that the less restrictive problem of finding a consistent $k$-fold union of (possibly overlapping) axis-aligned hyperrectangles with \emph{minimum} $k$ is also NP-hard. 
In contrast, Problem~\ref{problem:intCHF} is computationally \textit{tractable} because the solution can be found by searching in the monotone chain of $\theta$-convex hypotheses that is uniquely defined by the training examples. 
The design and study of algorithms for the \textit{approximation} of a consistent hypothesis with the \textit{smallest} number of blocks is an interesting direction for future research.
	
The notion of weak convexity can be meaningless for certain metric spaces. 
For example, for metric spaces $(X,\dist_2)$ with \textit{finite} domains $X \subseteq \R^d$, $\ti(x, y) = \{x, y\}$ holds almost surely for all $x,y\in X$.
To overcome this problem, one can consider the following \emph{relaxation} of weak convexity which allows the triangle inequality to hold up to some \textit{tolerance} $\varepsilon$, instead of equality.
More precisely, a subset $A \subseteq X$ of a metric space $(X,\dist)$ is \emph{$(\theta,\varepsilon)$-convex} for some $\theta \geq 0$ and $\varepsilon \in [0,\theta]$, if for all $x,y \in A$ and $z \in X$ it holds that $z \in A$ whenever $\dist(x, y) \leq \theta$ and $\dist(x, z) + \dist(z, y) \leq \dist(x, y) + \epsilon$.
One can show that all results of Section~\ref{sec:weakconvexity} can naturally be generalized to this relaxed definition.
Another interesting question is whether this relaxed form of weak convexity can successfully be applied to \textit{clustering} this kind of finite point sets.

\begin{appendices}

\section*{Appendix}
\label{sec:appendix}

\begin{lemma} 
    \label{lm:VC-dimension_hyperrectangles} 
    For any constant $c\geq 0$,
    \begin{equation}
        \label{eq:vcunitdcube}
        \vcdim(\ccWConvRectangles)= \bigo(d^{c+1} \log{d})
    \end{equation}
    for all $d \in \N$ and $\theta \geq \frac{2d}{\euler} \sqrt[d]{\frac{\euler}{d^{c-1}}}$.
\end{lemma}
\begin{proof}
    Note first that all concepts in $\ccWConvRectangles<d>$ consist of a single block formed by an axis-aligned hyperrectangle in $\unitdcube$, as $D_1(x,y) \leq d$ for all $x,y \in \unitdcube$. Furthermore, they are $\theta$-convex for any $\theta > 0$.
    We claim that for $\theta$ in the lemma, 
    \begin{equation} 
        \label{eq:numberofblocks} 
	\ccWConvRectangles \subseteq \kfoldunion[\bigo\left(d^c\right)]{(\ccWConvRectangles<d>)} \enspace , \end{equation}
    i.e., each $\theta$-convex set in $\ccWConvRectangles$ is the union of at most $\bigo(d^c)$ axis-aligned
    hyperrectangles.
    Using that $\vcdim(\ccWConvRectangles<d>) = 2d$, we get (\ref{eq:vcunitdcube}) by (\ref{eq:numberofblocks})
    and \ref{thm:BEHW:k-fold-unions} of Theorem~\ref{thm:BEHW}.
 
    It remains to prove (\ref{eq:numberofblocks}). Note that for any $C \in \ccWConvRectangles$, the number of blocks in $C$ is bounded
	by the cardinality of a largest subset $S \subset \unitdcube$ satisfying $D_1(x,y) > \theta$ for all $x,y\in S$.
	We show (\ref{eq:numberofblocks}) by proving \begin{equation} 
		\label{eq:boundonk} \card{S} = \bigo(d^c) \enspace .
	\end{equation}
 	To see (\ref{eq:boundonk}), notice first that for all $v \in \unitdcube$, for the volume of the intersection of
	$\unitdcube$ with the $L_1$ $d$-ball $\dball{v} = \{u \in \R^d : D_1(v,u) \leq r\}$ we have
    \begin{equation}
		\label{eq:volume} 
		\vol\left(\dball{v} \cap \unitdcube\right) \geq \frac{\vol(\dball{v})}{2^d} \enspace .
	\end{equation}
	Thus, for all $x,y$ above it holds that $ \dball<\theta/2>{x} \cap \dball<\theta/2>{y} = \emptyset, $ which, in turn,
	implies \begin{equation} \label{eq:boundonS} \card{S} \leq  \frac{2^d
			\vol\left(\unitdcube\right)}{\vol\left(\dball<\theta/2>{v}\right)} \end{equation} by (\ref{eq:volume}).
	Using $\vol(\dball{v})=(2r)^d /d!
	$ and $d! \leq d^{d+1}/\euler^{d-1}$ \citep[see, e.g.,][]{Knuth1997}, from
	(\ref{eq:boundonS}) we have 
	\begin{equation}
		\label{eq:boundb}
		\card{S} 
		\leq \frac{d^{d+1} 2^d }{\euler^{d-1}\theta^d} = \bigo(d^c)
	\end{equation}
	for $\theta \geq \frac{2d}{\euler}
		\sqrt[d]{\frac{\euler}{d^{c-1}}}$, completing the proof of (\ref{eq:boundonk}).
\end{proof}

\end{appendices}

\bigskip
\noindent
\textbf{Acknowledgements} We are grateful to Victor Chepoi for his helpful comments on an early version of this paper and Florian Seiffarth for his technical support of the experiments.
This research has been funded by the Federal Ministry of Education and Research of Germany and the state of North Rhine-Westphalia as part of the Lamarr Institute for Machine Learning and Artificial Intelligence, LAMARR22B (\url{https://lamarr-institute.org/}). The authors gratefully acknowledge this support.

\bibliographystyle{plainnat}
\bibliography{references}

\end{document}